\documentclass[11pt]{article}
\usepackage{fullpage}
\pdfoutput=1
\usepackage{times}
\usepackage{graphicx,color}
\usepackage{array,float}
\usepackage[hyphens]{url}
\usepackage[usenames,dvipsnames]{xcolor}
\usepackage{amstext,amssymb,amsmath}
\usepackage{amsthm}
\usepackage{mathtools}
\usepackage{verbatim}
\usepackage{bm}
\usepackage{paralist}
\usepackage{ulem}\normalem
\usepackage[numbers]{natbib}
\usepackage{hyperref}
\usepackage[noend]{algpseudocode}
\usepackage{algorithm}
\usepackage{xspace}
\usepackage{booktabs}

\newif\ifnotes\notesfalse

\ifnotes
\usepackage[textsize=scriptsize,textwidth=2cm]{todonotes}
\else
\usepackage[disable]{todonotes}
\fi

\newtheorem{lem}{Lemma}
\newtheorem{thm}[lem]{Theorem}
\newtheorem{corollary}[lem]{Corollary}
\newtheorem{defn}[lem]{Definition}
\newtheorem{claim}[lem]{Claim}

\newtheorem{remark}{Remark}

\newcommand{\eps}{\varepsilon}
\newcommand{\Ex}{\mathop{\mathbb{E}}}

\newcommand{\polylog}{\text{polylog}\,}
\newcommand{\state}{\ensuremath{\mathsf{st}}}

\newcommand{\vecmu}{\ensuremath{\vec{\mu}}}
\newcommand{\vecx}{\ensuremath{\vec{x}}}
\newcommand{\vecy}{\ensuremath{\vec{y}}}
\newcommand{\veca}{\ensuremath{\vec{a}}}
\newcommand{\vecb}{\ensuremath{\vec{b}}}
\newcommand{\vecS}{\ensuremath{\vec{S}}}

\newcommand{\A}{\mathcal{A}}

\newcommand{\bE}{\mathbb{E}}

\newcommand{\cA}{\mathcal{A}}

\newcommand{\cD}{\mathcal{D}}

\newcommand{\cS}{\mathcal{S}}

\newcommand{\uniffrom}{\ensuremath{\overset{r}{\leftarrow}}}

\newcommand{\fpriv}{\ensuremath{\tilde{f}}}

\providecommand{\ignore}[1]{\relax}

\newcommand{\edclose}[2]{\approxeq_{(#1, #2)}}

\newcommand{\pnoteinline}[1]{\todo[color=violet!30,inline]{AR: #1}}

\newcommand{\vnote}[1]{\todo[color=blue!40]{VF: #1}}

\newcommand{\remove}[1]{}
\begin{document}

\title{\Large Amplification by Shuffling: \\
From Local to Central Differential Privacy via Anonymity}
\author{
\'Ulfar Erlingsson\thanks{Google Research -- Brain}\\
\and
Vitaly Feldman\footnotemark[1] \\
\and
Ilya Mironov\footnotemark[1] \\
\and
Ananth Raghunathan\footnotemark[1] \\
\and
Kunal Talwar\footnotemark[1] \\
\and
Abhradeep Thakurta\thanks{UC Santa Cruz and Google Research -- Brain.}
}
\date{}
\maketitle

\begin{abstract}
Sensitive statistics are often collected across sets of users, with repeated collection of reports done over time. For example, trends in users' private preferences or software usage may be monitored via such reports. We study the collection of such statistics in the local differential privacy (LDP) model, and describe an algorithm whose privacy cost is polylogarithmic in the number of changes to a user's value.

More fundamentally---by building on anonymity of the users' reports---we also demonstrate how the privacy cost of our LDP algorithm can actually be much lower when viewed in the central model of differential privacy. We show, via a new and general privacy amplification technique, that any permutation-invariant algorithm satisfying $\varepsilon$-local differential privacy will satisfy $(O(\varepsilon \sqrt{\log(1/\delta)/n}), \delta)$-central differential privacy. By this, we explain how the high noise and $\sqrt{n}$ overhead of LDP protocols is a consequence of them being significantly more private in the central model. As a practical corollary, our results imply that several LDP-based industrial deployments may have much lower privacy cost than their advertised $\varepsilon$ would indicate---at least if reports are anonymized.
\end{abstract}

\section{Introduction}\label{sec:intro}
A frequent task in data analysis is the monitoring of the statistical properties of evolving data
in a manner that requires repeated computation on the entire evolving dataset.
Software applications commonly apply
online monitoring, e.g.,
to establish trends in the software configurations or usage patterns.
However,
such monitoring may impact the privacy of software users,
as it may
directly or indirectly expose some of their sensitive attributes
(e.g., their location, ethnicity, gender, etc.),
either completely or partially.
To address this,
recent work has proposed
a number of mechanisms
that provide users with strong privacy-protection guarantees
in terms of  of differential privacy~\cite{DMNS,Dwork-ICALP,KLNRS} and,
specifically,
mechanisms that provide
local differential privacy (LDP)
have been deployed by Google, Apple, and Microsoft~\cite{rappor,appledp,DKY17-Microsoft}.

The popularity and practical adoption of
LDP monitoring mechanisms
stems largely from their simple trust model:
for any single LDP report
that a user contributes
about one of their sensitive attributes,
the user will benefit from strong differential privacy guarantees
even if
the user's report becomes public
and all other parties collude against them.

However,
this apparent simplicity
belies the realities of most monitoring applications.
Software monitoring,
in particular,
near always involves repeated collection of reports over time, either
on a regular basis
or triggered by specific software activity;
additionally, not just one, but multiple, software attributes may be monitored,
and these attributes may all be correlated, as well as sensitive,
and may also change in a correlated fashion.
Hence,
a user's actual LDP privacy guarantees
may be dramatically lower than they might appear,
since LDP guarantees
can be exponentially reduced by multiple correlated reports
(see Tang et al.~\cite{korolova-apple} for a case study).
Furthermore, lower accuracy is achieved
by
mechanisms that defend against such privacy erosion
(e.g., the memoized backstop in Google's RAPPOR~\cite{rappor}).
Thus, to square this circle,
and make good privacy/utility tradeoffs,
practical deployments of privacy-preserving monitoring
rely on additional assumptions---in
particular, the assumption
that each user's reports
are anonymous at each timestep
and unlinkable over time.

In this work,
we formalize
how the addition of anonymity guarantees
can improve
differential-privacy protection.
%
Our direct motivation is the
\emph{Encode, Shuffle, Analyze} (ESA) architecture
and \textsc{Prochlo} implementation of Bittau et al.~\citep{prochlo},
which relies on an explicit intermediary
that processes LDP reports from users
to ensure their anonymity.
The ESA architecture
is designed to ensure
a sufficient number of reports are collected at each timestep
so that any one report can ``hide in the crowd''
and to ensure that those reports
are randomly shuffled
to eliminate any signal in their order.
Furthermore,
ESA will also ensure that
reports are disassociated and stripped of
any identifying metadata
(such as originating IP addresses)
to prevent the linking of any two reports to a single user,
whether over time or
within the collection at one timestep.
Intuitively,
the above steps taken to preserve anonymity
will greatly increase
the uncertainty in the analysis of users' reports;
however,
when introducing ESA,
Bittau et al.~\citep{prochlo}
did not show how
that uncertainty could be utilized
to provide
a tighter upper bound on the worst-case privacy loss.

Improving on this,
this paper derives results that
account for the benefits of anonymity
to provide stronger differential privacy bounds.
First,
inspired by
differential privacy under continual observation,
we describe an algorithm
for high-accuracy online monitoring of users' data
in the LDP model
whose total privacy cost is polylogarithmic
in the number of changes to each user's value.
This algorithm shows
how LDP guarantees can be established in online monitoring,
even when users report
repeatedly, over multiple timesteps,
and whether they report on
the same value,
highly-correlated values,
or independently-drawn values.

Second, and more fundamentally,
we show how---when each report is properly
anonymized---any collection of LDP reports
(like those at each timestep of our algorithm above)
with sufficient privacy ($\varepsilon < 1$)
is actually subject to much stronger privacy guarantees
in the central model of differential privacy.
This improved worst-case privacy guarantee
is a direct result of the uncertainty induced by anonymity,
which can prevent reports from any single user
from being singled out or linked together,
whether in the set of reports at each timestep, or over time.


\subsection{Background and related work.}
Differential privacy is a quantifiable measure
of the stability of the output of a
randomized mechanism
in the face of changes to its input data---specifically, when the input from any single user is changed.
(See Section~\ref{sec:backPrelim} for a formal definition.)

\paragraph{Local differential privacy (LDP).}
In the local differential privacy model, formally introduced by Kasiviswanathan et al.~\cite{KLNRS},
the randomized mechanism's output
is the transcript of the entire interaction between a specific user
and a data collector (e.g., a monitoring system).
Even if a user arbitrarily changes their privately held data,
local differential privacy guarantees
will ensure the stability of the distribution of all possible transcripts.
\emph{Randomized response}, a disclosure control technique from the 1960s~\cite{Warner}, is a particularly simple technique for designing single-round LDP mechanisms.
%
Due to their attractive trust model,
LDP mechanisms have recently received
significant industrial adoption for the
privacy-preserving collection of heavy hitters~\cite{rappor,appledp,DKY17-Microsoft},
as well as increased academic attention~\cite{BS15,BST17,Qin:2016,Wang:2017}.


\paragraph{Anonymous data collection.}
As a pragmatic means for reducing privacy risks,
reports are typically anonymized and often aggregated
in deployments of monitoring by careful operators (e.g., RAPPOR~\cite{rappor})---even
though anonymity
is no privacy panacea~\cite{exposed,wsj:location}.

To guarantee anonymity of reports,
multiple mechanisms have been developed,
Many, like Tor~\cite{tor},
are based on the ideas of cryptographic onion routing or mixnets,
often trading off latency
to offer much stronger guarantees~\cite{vuvuzela,stadium,karaoke}.
Some, like those of \textsc{Prochlo}~\cite{prochlo},
are based on oblivious shuffling,
with trusted hardware and attestation used to increase assurance.
Others make use of the techniques of secure multi-party computation,
and can simultaneously aggregate reports and ensure their anonymity~\cite{prio,secureagg}.
Which of these mechanisms is best used in practice
is dictated by what trust model and assumptions apply to any specific deployment.

\paragraph{Central differential privacy.}
The traditional, central model of differential privacy applies to a centrally-held dataset
for which privacy is enforced by a trusted \emph{curator}
that mediates upon queries posed on the dataset by an untrusted \emph{analyst}---with
curators achieving differential privacy
by adding uncertainty (e.g., random noise)
to the answers for analysts' queries.
For differential privacy,
answers to queries
need only be stable with respect to
changes in the data of a single user (or a few users);
these may constitute only a small fraction of the whole, central dataset,
which can greatly facilitate
the establishment of differential privacy guarantees.
%
%
Therefore,
the central model
can offer much better privacy/utility tradeoffs
than the LDP setting.
(In certain cases,
the noise introduced by the curator may even be less than the uncertainty due to population sampling.)

\paragraph{Longitudinal privacy.}

Online monitoring with privacy was formalized by Dwork et al.\
as the problem of differential privacy under continual observation~\cite{Dwork-continual}.
That work proposed a privacy-preserving mechanisms in the central model of differential privacy,
later extended and applied by Chan et al.~\cite{CSS11-continual} and Jain et al.~\cite{JKT-online}.

Continual observations constitute a powerful attack vector. For example, Calandrino et al.~\cite{Calandrino11-like} describe an attack on a collaborative-based recommender system via passive measurements that effectively utilizes differencing between a sequence of updates.

In the local model, Google's RAPPOR~\cite{rappor} proposed a novel \emph{memoization} approach as a backstop against privacy erosion over time: A noisy answer is memorized by the user and repeated in response to the same queries about a data value. To avoid creating a trackable identifier, RAPPOR additionally randomizes those responses, which may only improve privacy. (See Ding et al.~\cite{DKY17-Microsoft} for alternative approach to memoization.) Although memoization prevents a single data value from ever being fully exposed, over time the privacy guarantees will weaken if answers are given about correlated data or sequences of data values that change in a non-independent fashion.

More recently, Tang et al.~\cite{korolova-apple} performed a detailed analysis of
one real-world randomized response mechanisms
and examined its longitudinal privacy implications.

\subsection{Our contributions}
Motivated by the gap in accuracy between central and local differential privacy under continual observations, we describe a general technique for obtaining strong central differential privacy guarantees from (relatively) weak privacy guarantees in the local model.
%
Specifically, our main technical contribution demonstrates that random shuffling of data points ensures
 that the reports from any LDP protocol will also satisfy central differential privacy at a per-report privacy-cost bound that is a factor $\sqrt{n}$ lower than the LDP privacy bound established in the local model.
Here, $n$ is the total number of reports, which can reach into the billions in practical deployments; therefore, the privacy amplification can be truly significant.
%



\paragraph{Privacy amplification by shuffling.}
An immediate corollary of our amplification result is that composing client-side local differential privacy with server-side shuffling allows one to claim strong central differential privacy guarantees \emph{without any explicit server-side noise addition}.

For this corollary to hold,
the LDP reports must be amenable to anonymization via shuffling:
the reports cannot have any discriminating characteristics
and must, in particular, all utilize the same local randomizer
(since the distribution of random values may be revealing).
However,
even if this assumption holds only partially---e.g.,
due to faults, software mistakes, or adversarial control of some reports---the
guarantees  degrade gracefully.
%
Each set of $n'$ users for which the corollary is applicable
(e.g., that utilize the same local randomizer)
will still be guaranteed a factor $\sqrt{n'}$ reduction in their worst-case
privacy cost in the central model.
(See Section \ref{sec:shuffle_after} for a more detailed discussion.)

It is instructive to compare our technique with privacy amplification by subsampling~\cite{KLNRS}.
As in the case of subsampling, we rely on the secrecy of the samples that are used in nested computations. However, unlike subsampling, shuffling, by itself, does not offer any differential privacy guarantees. Yet its combination with a locally differentially private mechanism has an effect that is essentially as strong as that achieved via known applications of subsampling ~\cite{BST14,DP-DL,BalleBG18}.
An important advantage of our reduction over subsampling is that it can include all the data reports (exactly once) and hence need not modify the underlying statistics of the dataset.

In concurrent and independent work, Cheu et al.~\cite{dpmixnets}
formally define an augmented local model of differential privacy that includes an anonymizing shuffler and examine the sample complexity of solving several problems in this model. In this model they demonstrate privacy amplification for a fixed one-bit randomized response mechanism. The analysis in this simple case reduces to a direct estimation of $(\eps,\delta)$-divergence between two binomial distributions. This case was also the starting point of our work. Importantly, in the binary case the bound is $O(\min\{1,\eps_0\} e^{\eps_0/2}\sqrt{\log(1/\delta)/n}$), where $\eps_0$ is the privacy of an individual's randomized response. This bound is asymptotically the same as our general bound of $O((e^{\eps_0}-1)e^{2\eps_0} \sqrt{ \log(1/\delta)/n})$ when $\eps_0\leq 1$ but is stronger than the general bound when $\eps_0 >1$. Finding stronger bounds for general local randomizers in the $\eps_0 >1$ regime is a natural direction for future work.

 \vnote{It seems worth including this analysis in our paper. It has some advantages and we did derive it independently.}

We also remark that another recent privacy amplification technique, via contractive iteration~\cite{FMTT18} relies on additional properties of the algorithm and is not directly comparable to results in this work.
%


\paragraph{Lower bounds in the local model.}
Our amplification result can be viewed, conversely, as giving a lower bound for the local model.
Specifically, our reduction means that lower bounds in the central model translate---with a $\Omega(\sqrt{n})$ \emph{penalty} factor in the privacy parameter---to a large class of local protocols. 
In particular, this suggests that our results in the local model are near-optimal unless the corresponding results in the central model can be improved.
\paragraph{LDP monitoring with longitudinal privacy guarantees.}
%
We introduce an online monitoring protocol
that guarantees longitudinal privacy to users that report over multiple timesteps,
irrespective of whether their reports are about independent or correlated values.
By utilizing our protocol,
users need not worry about revealing too much over time,
and
if they are anonymous,
their reports may additionally
``hide in the crowd'' and
benefit by amplification-by-shuffling,
at least at each timestep.

As a motivating task,
we can consider the
collection of global statistics from users' mobile devices,
e.g., about users' adoption of software apps
or the frequency of users' international long-distance travel.
This task is a natural fit
for
a continual-observations protocol
with LDP guarantees---since
both software use and travel can be highly privacy sensitive---and
can be reduced to collecting a boolean value from each user
(e.g., whether they are in a country far from home).
However, our protocol can be extended to the collection of multi-valued data
or even data strings by building on existing techniques~\cite{rappor,appledp,BST17}.


Concretely, we consider the collection of user statistics across $d$ time periods
(e.g., for $d$ days) with each user \emph{changing} their underlying boolean value
at most $k$ times for some $k \leq d$.
This is the only assumption we place on the data collection task. For software adoption and international travel, the limit on the number of changes is quite natural.
New software is adopted, and then (perhaps) discarded, with only moderate frequency; similarly, speed and distance limit the change rate for even the most ardent travelers.
Formally, we show the following: Under the assumption stated above, one can estimate all the $d$ frequency statistics from $n$ users with error at most $O((\log d)^2k\sqrt{n}/\eps)$ in the local differential privacy model.

Motivated by similar issues, in a recent work Joseph et al.~\cite{JRUW18} consider the problem of tracking a distribution that changes only a small number of times. Specifically, in their setting each user at each time step $t$ receives a random and independent sample from a distribution $P_t$. It is assumed that $P_t$ changes at most $k$ times and they provide utility guarantees that scale logarithmically with the number of time steps. The key difference between this setting and ours is the assumption of independence of each user's inputs across time steps. Under this assumption even a fixed unbiased coin flip would result in each user receiving values that change in most of the steps. Therefore the actual problems addressed in this work are unrelated to ours and we rely on different algorithmic techniques.

\subsection{Organization of the paper}

Section~\ref{sec:backPrelim} introduces our notation and recalls the definition of differential privacy. Section~\ref{sec:logData} provides an algorithm for collecting statistics and proves its accuracy and privacy in the local model under continual observations. Section~\ref{sec:privLDPMix} contains the derivation of our amplification-by-shuffling result.  Section~\ref{sec:discussion} concludes with a discussion. 
\section{Technical Preliminaries and Background}
\label{sec:backPrelim}

\noindent
\textbf{Notation:} For any $n \in \mathbb{N}$, $[n]$ denotes the set $\{1, \ldots, n\}$. For a vector $\vecx$, $x[j]$ denotes the value of its $j$'th coordinate. For an indexed sequence of elements $a_1,a_2,\ldots$ and two indices $i,j$ we denote by $a_{i:j}$ the subsequence $a_i,a_{i+1},\ldots,a_j$ (or empty sequence if $i>j$).
$\|\vecx\|_1$ denotes $\sum |x[j]|$, which is also the Hamming weight of $\vecx$ when $\vecx$ has entries in $\{-1, 0, 1\}$. All logarithms are meant to be base $e$ unless stated otherwise. For a finite set~$X$, let $x \uniffrom X$ denote a sample from $X$ drawn uniformly at random.
\vspace*{1ex}

\noindent
\textbf{Differential privacy:} The notion of differential privacy was introduced by Dwork et al.~\cite{DMNS,Dwork-ICALP}. We are using the (standard) relaxation of the definition that allows for an additive term $\delta$.

\begin{defn}[$(\eps,\delta)$-DP~\cite{ODO}]\label{def:dp} 
A randomized algorithm $\cA\colon \cD^n \to\cS$ satisfies $(\eps,\delta)$-differential privacy (DP) if for all $S\subset\cS$ and for all \emph{adjacent} $D,D'\in \cD$ it holds that
\[
\Pr[\cA(D)\in S]\leq e^\eps\Pr[\cA(D')\in S]+\delta.
\]
\end{defn}
The notion of adjacent inputs is application-dependent, and it is typically taken to mean that $D$ and $D'$ differ in one of the $n$ elements (that corresponds to the contributions of a single individual). We will also say that an algorithm satisfies differential privacy {\em at index $1$} if the guarantees hold only for datasets that differ in the element at index $i$.
We assume the $\eps$ parameter to be a small constant, and $\delta$ is set to be much smaller than $1/|D|$. We repeatedly use the (advanced) composition property of differential privacy.

\begin{thm}[Advanced composition~\cite{dwork2010boosting, dwork2014algorithmic}]\label{thm:composition}
If $\cA_1, \ldots, \cA_k$ are randomized algorithms satisfying $(\eps, \delta)$-DP, then their composition, defined as $\left(\cA_1(D), \ldots, \cA_k(D)\right)$ for $D \in \cD$ satisfies $(\eps', k\delta + \delta')$ differential privacy where $\eps' = \eps \sqrt{2k\log(1/\delta')} + k\eps(\exp(\eps)-1)$. Moreover, $\cA_i$ can be chosen \emph{adaptively} depending on the outputs of $\cA_1, \ldots, \cA_{i-1}$. 
\end{thm}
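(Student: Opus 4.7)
The plan is to prove the theorem by analyzing the \emph{privacy loss random variable} associated to the composed mechanism and applying a sub-Gaussian concentration bound to its sum. Let $\cA = (\cA_1,\ldots,\cA_k)$ be the composed mechanism, fix adjacent $D, D'$, and let $O = (O_1,\ldots,O_k) \sim \cA(D)$. Define the pointwise loss at step $t$ by $L_t = \log \frac{\Pr[\cA_t(D) = O_t \mid O_{1:t-1}]}{\Pr[\cA_t(D') = O_t \mid O_{1:t-1}]}$, so that the total loss $L = \sum_{t=1}^k L_t$ determines the density ratio of $\cA(D)$ vs.\ $\cA(D')$. The standard reduction (e.g., lemma of Kasiviswanathan--Smith) is that to prove $(\eps',\delta'')$-DP it suffices to show $\Pr_{O \sim \cA(D)}[L(O) > \eps'] \leq \delta''$ up to an additive $\delta''$ slack on the bad events.

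First I would reduce from $(\eps,\delta)$-DP to a ``clean'' form: for each $t$, conditioned on $O_{1:t-1}$, there is an event $E_t$ of probability at most $\delta$ (under both $\cA_t(D)$ and $\cA_t(D')$) such that on the complement, the pointwise loss $|L_t|$ is bounded by $\eps$. A union bound over $t \in [k]$ contributes the $k\delta$ term in the final $\delta$-bound. Conditioned on avoiding all $E_t$, each $L_t$ lies in $[-\eps, \eps]$ almost surely.

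Next I would bound the conditional expectation of $L_t$. The key calculation, which uses only that $\cA_t$ is $\eps$-DP on the good event and that $L_t$ is a log-ratio of a probability measure against itself, gives $\Ex[L_t \mid O_{1:t-1}] \leq \eps(e^\eps - 1)/2$ by expanding $e^{L_t} + e^{-L_t} \leq e^\eps + e^{-\eps}$ and using $\log x \leq x - 1$; summing over $t$ produces the mean-shift term $k\eps(e^\eps-1)$ appearing in $\eps'$ (the factor of $2$ being absorbed by a looser but cleaner bound).

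Finally I would form the martingale $M_t = L_t - \Ex[L_t \mid O_{1:t-1}]$, which has bounded increments $|M_t| \leq 2\eps$ on the good event. Azuma--Hoeffding then gives $\Pr[\sum_t M_t > \eps\sqrt{2k\log(1/\delta')}] \leq \delta'$, and combining with the mean bound yields $\Pr[L > k\eps(e^\eps-1) + \eps\sqrt{2k\log(1/\delta')}] \leq \delta' + k\delta$, which is exactly the claim. Adaptivity is handled transparently because $L_t$ is defined with respect to the conditional distribution given $O_{1:t-1}$, so the filtration and martingale structure are unaffected by the analyst's choice of $\cA_t$ based on earlier outputs. The main obstacle is the bookkeeping of the ``bad'' sub-$\delta$ events: one must show that the same exceptional set works simultaneously for bounding $|L_t|$ and for comparing $\cA(D)$ and $\cA(D')$, which requires carefully coupling the two output distributions rather than treating them independently; this is where proofs in the literature typically invoke a maximal coupling or an explicit dominating measure.
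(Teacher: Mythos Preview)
The paper does not prove this theorem; it is quoted verbatim as a known result from \cite{dwork2010boosting,dwork2014algorithmic} and used as a black box in the proof of Theorem~\ref{thm:local-to-centra-swap}. There is therefore no ``paper's own proof'' to compare against.

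That said, your sketch is the standard argument from the cited sources: define the privacy-loss random variable, strip off a $k\delta$-probability bad set to reduce to bounded increments, bound the conditional mean of each $L_t$ by the KL-type inequality $\Ex[L_t\mid O_{1:t-1}]\leq \eps(e^\eps-1)$, and then apply Azuma--Hoeffding to the centered martingale. The one place your outline is vague is precisely where you flag it: extracting, from each $(\eps,\delta)$-DP step, a sub-$\delta$ event outside of which the pointwise loss is bounded by $\eps$ is not automatic from the definition and requires the decomposition of an $(\eps,\delta)$-indistinguishable pair into an $\eps$-indistinguishable ``good'' part plus a $\delta$-mass ``bad'' part (equivalently, a maximal coupling or the simulation-based characterization used in \cite{dwork2010boosting}). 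With that lemma in hand your plan goes through; without it the step ``there is an event $E_t$ of probability at most $\delta$ such that on the complement $|L_t|\leq\eps$'' is asserted rather than proved.
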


A straightforward corollary implies that for $\eps, \eps' < 1$, a $k$-fold composition of $(\eps, \delta)$-DP algorithms leads to an $O(\sqrt{k\log(1/\delta)})$ overhead, i.e., $\eps' = 2\eps\sqrt{2k\log(1/\delta)}$. 

It will also be convenient to work with the notion of distance between distributions on which $(\eps,\delta)$-DP is based more directly. We define it below and describe some of the properties we will use. Given two distributions $\mu$ and $\mu'$, we will say that they are $(\eps, \delta)$-DP close, denoted by $\mu \edclose{\eps}{\delta} \mu'$, if for all measurable $A$, we have
\begin{align*}
    \exp(-\eps)(\mu'(A) - \delta) \leq \mu(A) \leq \exp(\eps) \mu'(A) + \delta.
\end{align*}
For random variables $X, X'$, we write $X \edclose{\eps}{\delta} X'$ to mean that their corresponding distributions $\mu, \mu'$ are $(\eps, \delta)$-DP close. We use $X \approxeq X'$ to mean that the random variables are identically distributed. 

For distributions $\mu_1, \mu_2$ and $a\in [0,1]$, we write $a \mu_1 + (1-a) \mu_2$ to denote the mixture distribution that samples from $\mu_1$ with probability $a$ and from $\mu_2$ with probability $(1-a)$.
The following properties are well-known properties of $(\eps, \delta)$-DP.
\begin{lem} The notion of $(\eps, \delta)$-DP satisfies the following properties:
\begin{description}
\item[Monotonicity] Let $\mu \edclose{\eps}{\delta} \mu'$. Then for $\eps' \geq \eps$ and $\delta' \geq \delta$, $\mu \edclose{\eps'}{\delta'} \mu'$.
\item[Triangle inequality] Let $\mu_1 \edclose{\eps_1}{\delta_1} \mu_2$ and $\mu_2 \edclose{\eps_2}{\delta_2} \mu_3$. Then  $\mu_1 \edclose{\eps_1+\eps_2}{\delta_1+\delta_2} \mu_3$.
\item[Quasi-convexity] Let $\mu_1 \edclose{\eps}{\delta} \mu'_1$ and $\mu_2 \edclose{\eps}{\delta} \mu'_2$, then for any $a \in [0,1]$, it holds that $(1-a) \mu_1 + a \mu_2 \edclose{\eps}{\delta} (1-a) \mu'_1 + a \mu'_2$.
\remove{    
\item[Post-processing] Let $X \edclose{\eps}{\delta} X'$. Then for any Markov kernel (or, equivalently, randomized algorithm) $\A$, it holds that $\A(X) \edclose{\eps}{\delta} \A(X')$.
\item[Expectation closeness] Let $X,X'$ take values in $[0,1]$ and $X \edclose{\eps}{\delta} X'$. Then $\Ex[X] \leq \exp(\eps)\Ex[X'] + \delta$.}
\end{description}
\end{lem}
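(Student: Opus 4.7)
The plan is to verify each property by unpacking the symmetric form of the definition: $\mu \edclose{\eps}{\delta} \mu'$ is equivalent to the two inequalities $\mu(A) \le e^\eps \mu'(A)+\delta$ and $\mu'(A) \le e^\eps \mu(A)+\delta$ holding for every measurable $A$ (the second is simply a rewriting of the stated lower bound $e^{-\eps}(\mu'(A)-\delta) \le \mu(A)$).

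For monotonicity I would just observe that $e^{\eps'} \ge e^\eps$ and $\delta' \ge \delta$ weaken both inequalities in the right direction: $\mu(A) \le e^\eps \mu'(A)+\delta \le e^{\eps'}\mu'(A)+\delta'$, and the other direction is identical by symmetry. For quasi-convexity the key point is that $\mu \mapsto \mu(A)$ is linear, so I would apply the hypotheses coordinate-wise to the mixture to get $(1-a)\mu_1(A)+a\mu_2(A) \le (1-a)(e^\eps \mu_1'(A)+\delta)+a(e^\eps \mu_2'(A)+\delta) = e^\eps\,((1-a)\mu_1'+a\mu_2')(A) + \delta$, and the symmetric inequality in the reverse direction closes the case. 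Neither step requires anything beyond chasing the definition.

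The only nontrivial item is the triangle inequality. Direct chaining of the two hypotheses yields $\mu_1(A) \le e^{\eps_1}\mu_2(A)+\delta_1 \le e^{\eps_1+\eps_2}\mu_3(A) + e^{\eps_1}\delta_2+\delta_1$, and the symmetric chain in the opposite direction yields $\mu_3(A) \le e^{\eps_1+\eps_2}\mu_1(A) + e^{\eps_2}\delta_1+\delta_2$. The naive additive constant is therefore $e^{\eps_1}\delta_2+\delta_1$, which is strictly worse than the claimed $\delta_1+\delta_2$ once $\eps_1>0$. To match the stated bound I would invoke the standard coupling/trimming characterization of $(\eps,\delta)$-closeness: $\mu \edclose{\eps}{\delta}\mu'$ iff there exist events $E,E'$ with $\mu(E^c),\mu'(E'^c) \le \delta$ such that on the cores $E,E'$ the two distributions are pointwise within a multiplicative factor of $e^\eps$. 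Given such trimmings for the pairs $(\mu_1,\mu_2)$ and $(\mu_2,\mu_3)$, I would take the union of the two ``bad'' sets on $\mu_2$ (total $\mu_2$-mass at most $\delta_1+\delta_2$), transport the resulting trimming back to $\mu_1$ and forward to $\mu_3$ through the two hypotheses, and observe that on the surviving cores the multiplicative factors compose to $e^{\eps_1+\eps_2}$.

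I expect the main obstacle to be precisely this refinement of the additive constant: executing the coupling argument carefully, and in particular verifying that the total deleted mass really is bounded by $\delta_1+\delta_2$ rather than by $\delta_1+e^{\eps_1}\delta_2$. The remaining two items are essentially one-line unpackings of the definition.
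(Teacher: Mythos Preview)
The paper does not prove this lemma; it simply states the three properties as ``well-known.'' Your arguments for monotonicity and quasi-convexity are correct and are the standard one-line unpackings of the definition.

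For the triangle inequality, you correctly observe that direct chaining only yields the additive term $e^{\eps_1}\delta_2+\delta_1$, and you propose to recover the sharper $\delta_1+\delta_2$ via a trimming/coupling characterization. This plan cannot succeed, because the statement with additive term $\delta_1+\delta_2$ is in fact false under the definition given in the paper. Take distributions on $\{0,1\}$ with $\mu_3=(1,0)$, $\mu_2=(0.99,0.01)$, $\mu_1=(0.97,0.03)$, and set $\eps_1=\ln 2$, $\eps_2=0$, $\delta_1=\delta_2=0.01$. One checks directly that $\mu_1 \edclose{\ln 2}{0.01} \mu_2$ and $\mu_2 \edclose{0}{0.01} \mu_3$, yet $\mu_1(\{1\})=0.03 > 0.02 = e^{\ln 2}\cdot \mu_3(\{1\}) + 0.02$, so $\mu_1 \edclose{\ln 2}{0.02} \mu_3$ fails. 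The step in your sketch that breaks is the ``transport back to $\mu_1$'': a bad set of $\mu_2$-mass at most $\delta_1+\delta_2$ can have $\mu_1$-mass as large as $e^{\eps_1}(\delta_1+\delta_2)+\delta_1$, so the deleted mass on the $\mu_1$ side is not controlled by $\delta_1+\delta_2$.

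The bound that does hold, and that your naive chaining already proves, has additive slack $\delta_1 + e^{\eps_1}\delta_2$ in one direction and $\delta_2 + e^{\eps_2}\delta_1$ in the other. This discrepancy is harmless for the paper's purposes: every invocation of the triangle inequality in Section~4 is with $\delta_1=\delta_2=0$.
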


The following lemma is a reformulation of the standard privacy amplification-by-sampling result \cite{KLNRS} (with the tighter analysis from \cite{Ullman-lecturenotes}).
\begin{lem}[\cite{KLNRS,Ullman-lecturenotes}]
\label{lem:pas}
Let $q < \frac12$ and let $\mu_0, \mu_1$ be distributions such that $\mu_1 \edclose{\eps}{\delta} \mu_0$. For $\mu = (1-q) \mu_0 + q \mu_1$, it holds that $\mu \edclose{\eps'}{q\delta} \mu_0$, where $\eps' = \log(q(e^\eps-1)+1) \leq q(e^\eps-1)$.
\end{lem}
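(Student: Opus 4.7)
The plan is to unpack the definition of $\mu \edclose{\eps'}{q\delta} \mu_0$ and verify the two required inequalities directly, exploiting the mixture structure of $\mu$. For any measurable set $A$, observe that $\mu(A) = (1-q)\mu_0(A) + q\mu_1(A)$, so the hypothesis $\mu_1 \edclose{\eps}{\delta} \mu_0$ (which yields both $\mu_1(A) \le e^\eps \mu_0(A) + \delta$ and $\mu_1(A) \ge e^{-\eps}(\mu_0(A) - \delta)$) can be substituted into this identity term by term.

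For the upper bound, plugging $\mu_1(A) \le e^\eps \mu_0(A) + \delta$ into the mixture gives
\[
\mu(A) \le (1 + q(e^\eps - 1))\mu_0(A) + q\delta,
\]
which is exactly $e^{\eps'}\mu_0(A) + q\delta$ by the definition $e^{\eps'} = 1 + q(e^\eps-1)$. The promised simplification $\eps' \le q(e^\eps-1)$ then follows from $\log(1+x) \le x$.

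For the lower bound, substituting $\mu_1(A) \ge e^{-\eps}(\mu_0(A) - \delta)$ yields
\[
\mu(A) \ge (1 - q(1 - e^{-\eps}))\mu_0(A) - q e^{-\eps}\delta,
\]
and I want to show this is at least $e^{-\eps'}(\mu_0(A) - q\delta) = e^{-\eps'}\mu_0(A) - q e^{-\eps'}\delta$. I will handle the $\mu_0(A)$ coefficient and the $\delta$ coefficient separately: (i) $-q e^{-\eps}\delta \ge -q e^{-\eps'}\delta$ because $\eps' \le \eps$ (which in turn follows from $1 + q(e^\eps - 1) \le e^\eps$); and (ii) $1 - q(1 - e^{-\eps}) \ge e^{-\eps'}$ can be verified by cross-multiplying by $e^{\eps'} = 1 + q(e^\eps-1) > 0$, reducing it to the polynomial inequality $(1 - q(1 - e^{-\eps}))(1 + q(e^\eps-1)) \ge 1$. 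After the substitution $a := q(e^\eps-1)$ and noting $q(1-e^{-\eps}) = a e^{-\eps}$, this collapses to $a\bigl(1 - e^{-\eps}(1+a)\bigr) \ge 0$, which holds because $1 + a = 1 + q(e^\eps-1) \le e^\eps$ (using $q \le 1$).

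The only mildly delicate step is the lower bound, and I expect it to be the main obstacle; the decomposition into separate $\mu_0(A)$-coefficient and $\delta$-coefficient inequalities is what makes it clean, since each sub-inequality reduces either to the trivial comparison $\eps' \le \eps$ or to the defining identity $e^{\eps'} = 1 + q(e^\eps-1)$. Everything else is routine substitution into the mixture identity.
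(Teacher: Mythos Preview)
The paper does not give its own proof of this lemma; it is stated with citations to \cite{KLNRS,Ullman-lecturenotes} and left unproved. Your argument is correct: the upper bound is an immediate substitution into the mixture identity, and your treatment of the lower bound---splitting it into the $\mu_0(A)$-coefficient inequality and the $\delta$-coefficient inequality---is sound, with both pieces reducing (as you observe) to $1+q(e^\eps-1)\le e^\eps$, valid for any $q\le 1$. Note that your proof never actually uses the hypothesis $q<\tfrac12$; only $q\le 1$ is required, so the lemma as stated carries a slightly stronger assumption than your argument needs.
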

\section{Locally Private Protocol for Longitudinal Data}
\label{sec:logData}

Recall the motivation for collecting statistics from user devices with the intent of tracking global trends. We remain in the local differential privacy model, 
but our protocol addresses the task of collecting reports from users 
to derive global statistics that are \emph{expected} 
to change across time.
We consider the simplified task of collecting a boolean value from each user, e.g., their device being at an international location, far from the user's home.  However, our protocol can be straightforwardly extended to collecting richer data, such as strings, 
by building on existing techniques~\cite{rappor,appledp,BST17}. 

In what follows, we consider a natural model of collecting user statistics across time periods. We make two minimal assumptions: we are given the time horizon $d$, or the number of time periods (or days) ahead of time, and each user changes their underlying data at most $k \leq d$ times. The first assumption is mild: a loose upper bound on $d$ suffices, and the error depends only polylogarithmically on the upper bound. The second assumption can be enforced at the client side to ensure privacy, while suffering some loss in accuracy.

Our approach is inspired by the work on privacy under continual observations by Dwork et al.~\cite{Dwork-continual}, who give a (central) DP mechanism to maintain a counter that is incrementally updated in response to certain user-driven events. Its (central) differential privacy is defined in respect to a single increment, the so-called event-level privacy. The na\"ive solution of applying additive noise to all partial sums introduces error $\Theta(\sqrt{d})$,  proportional to the square root of the time horizon.
The key algorithmic contribution of Dwork et al.\ is an elegant aggregation scheme that reduces the problem of releasing partial sums to the problem of maintaining a binary tree of counters. By carefully correlating noise across updates, they reduce the error to $O(\polylog{d})$. (In related work Chan et al.~\cite{CSS11-continual} describe a post-processing procedure that guarantees output consistency; Xiao et al.~\cite{XWG-wavelets} present a conceptually similar algorithm framed as a basis transformation.)

We adapt these techniques to the local setting by pushing the tree-of-counters to the client (Algorithm~\ref{alg:marginalCountClientSideOnline} below). The server aggregates clients' reports and computes the totals (Algorithm~\ref{alg:marginalCountServerSideOnline}).

\paragraph{Setup.} We more formally define the problem of collecting global statistics based on reports from users' devices. Given a time horizon~$d$, we consider a population of $n$ users reporting a boolean value about their state at each time period $t \in [d]$. (Without loss of generality, we assume that $d$ is a power of 2.) Let $\vec{\state}_i = \left[\state_i[1], \ldots, \state_i[d] \right]$ denote the states of the $i$'th user across the $d$ time periods with at most $k$ changes. The task of collecting statistics requires the server to compute the sum $\sum_{i \in [n]} \state_i[t]$ for every time periods $t$.

For the reason that will become clear shortly, it is convenient to consider the setup where users report only changes to their state, i.e., a finite derivative of $\vec{\state}_i$. Let $\vecx_i = \left[x_i[1], \ldots, x_i[d]\right] \in \{-1, 0, 1\}^d$ denote the changes in the $i$'th user's state between consecutive time periods. Our assumption implies that each $\vecx_i$ has at most $k$ non-zero entries. It holds that $\state_i[t] = \sum_{\ell \in [t]} x_i[\ell]$ for all $t \in [d]$. Let $f_t = \sum_{i=1}^n x_i[t]$. For the collection task at hand, it suffices to estimate ``running counts'' or marginal sums $\{f_t\}_{t \in [d]}$. 

An online client-side algorithm for reporting statistics runs on each client device and produces an output for each time period. Correspondingly, the online server-side algorithm receives reports from $n$ clients and outputs estimates for the marginal $f_t$ at each time period $t$.\\

\paragraph{Outline.} To demonstrate the key techniques in the design of our algorithm, consider a version of the data collection task with every client's data known ahead of time. Given $\vec{x}_i$ for user $i$, the client-side algorithm produces (up to) $d$ reports and the server computes estimates of the marginal sum $f_t$ for all $t \in [d]$. Our algorithm is based on the tree-based aggregation scheme~\cite{Dwork-continual,CSS11-continual} used previously for releasing continual statistics over longitudinal data in the central model. Each client maintains a binary tree over the $d$ time steps to track the (up to) $k$ changes of their state. The binary tree ensures that each change affects only $\log_2 d$ nodes of the tree. We extend the construction of Dwork et al.~\cite{Dwork-continual} in a natural manner to the local model by having each client maintain and report values in this binary tree with sub-sampling as follows.

In the beginning, the client samples uniformly
from $[k]$ the $\kappa^*$'th change they would like to report on. Changes other than the $\kappa^*$'th one are ignored. The client builds a tree with leaves corresponding to an index vector capturing the $\kappa^*$'th change ($0$ everywhere except $\pm 1$ at the change). The rest of the nodes are populated with the sums of their respective subtrees. The client then chooses a random level of the tree to report on. 
Then, the client runs randomized response on each node of the selected level (with noise determined by $\eps$) and reports the level of the tree along with the randomized response value for each node. In actual implementations and our presentation of the protocol (Algorithm~\ref{alg:marginalCountClientSideOnline}) the tree is never explicitly constructed. The state maintained by the client consists of just four integer values ($\kappa^*$, the level of the tree, and two counters).

The server accumulates reports from clients to compute an aggregate tree comprising sums of reports from all clients. 
To compute the marginal estimate~$\tilde{f}_t$ for the time step~$t$, the server sums up the respective internal nodes whose subtrees form a disjoint cover of the interval $[1, t]$ and scales it up by the appropriate factor (to compensate for the client-side sampling). 



\paragraph{Notation.} To simplify the description of what follows, for a given $d$ (that is a power of two), we let $h \in [\log_2(d) + 1]$ (and variants such as $h_i$ and $h^*$) denote the $h$'th level of a balanced binary tree with $2d$ nodes where leaves have level 1. We let $H(h)$ denote the value $d/2^{h-1}$, the number of nodes at level $h$, and write~$H_i$ (resp.~$H^*$) to denote $H(h_i)$ (resp.~$H(h^*)$). We let $[h,j]$, for $h \in [\log_2(d)+1]$ and $j \in [H(h)]$ denote the $j$'th node at level $h$ of the binary tree $T$, and $T[h,j]$ as the corresponding value stored at the node. Algorithm~\ref{alg:marginalCountClientSideOnline} describes the client-side algorithm to generate differentially-private reports and Algorithm~\ref{alg:marginalCountServerSideOnline} describes the server-side algorithm that collects these reports and estimates marginals $\tilde{f}_t$ for each time period. Theorems~\ref{thm:ldpLongPriv} and~\ref{thm:ldpLongUtil} state the privacy and utility guarantees of the algorithms.

\begin{algorithm}[!htbp]
    \caption{($\cA_{\sf client}$) : Locally differentially private reports.}
    \label{alg:marginalCountClientSideOnline}
    \begin{algorithmic}[1]
    \Procedure{Setup}{$d, k$}
        \Require Time bound $d$; bound on the number of non-zero entries in $\vec{x}$: $k\geq \|\vec{x}\|_0$.
        
        \State\label{alg:pick_i_h} Sample $\kappa^* \uniffrom [k]$ and $h^* \uniffrom [\log_2(d) + 1]$
        \State Initialize counters $\kappa \gets 0$ and $c\gets 0$
    \EndProcedure
    \Statex
    \Procedure{Update}{$t, x_t,\eps$}
    \Require{Time $t\leq d$, $x_t\in\{-1,0,1\}$, privacy budget $\eps$.}
    \Ensure{Modifies counters $\kappa$ and $c$.}
    \If{$x_t\neq 0$}
        \State $\kappa\gets \kappa + 1$\Comment{$\kappa$ tracks the number of non-zeroes}
        \If{$\kappa = \kappa^*$}
            \State $c\gets x_t$
        \EndIf
    \EndIf
    \If{$t$ is divisible by $2^{h^*-1}$}
        \If{$c=0$}\label{step:c=0}
            \State $u\uniffrom \{-1, 1\}$\label{step:pm1}
        \Else
            \State\label{step:rr1-online} $b\gets 2\cdot\mathrm{B}\left(\frac{e^{\eps/2}}{1+e^{\eps/2}}\right)-1$\Comment{$\mathrm{B}(p)$---Bernoulli r.v.\ with expectation $p$}\label{step:bernoulli}
            \State\label{step:rr2-online} $u\gets b\cdot c$
            \State $c\gets 0$\Comment{$c$ will never be non-zero again}
        \EndIf
        \State\label{step:report-online} \textbf{report} $(h^*, t, u)$
    \EndIf
    \EndProcedure
    \end{algorithmic}
\end{algorithm}

\begin{algorithm}[!htbp]
    \caption{($\cA_{\sf server}$) :  Aggregation of locally differentially private reports (server side)}
    \label{alg:marginalCountServerSideOnline}
    \begin{algorithmic}[1]
        \Require For every $t \in [d]$, reports $(h_{i, t}, t, u_{i, t})$ from the $i$'th client running $\cA_\mathsf{client}$. (Some of these reports can be empty.) Privacy budget $\eps$, bound $k$.
        \State Create a balanced binary tree $T_\mathsf{sum}$ with $d$ leaves. 
        \For{$t$ \textbf{in} $[d]$}
        \For{$h$ s.t.\ $2^{h-1}$ divides $t$}
        \State $T_\mathsf{sum}[h, t / 2^{h-1}] \gets \sum_{i\colon h_{i,t}=h} u_{i, t}$\Comment{Accumulate all reports from level $h$ and time $t$}
        \EndFor
        \EndFor

        \For{$t$ \textbf{in} $[d]$}
     \State\label{step:C-start} Initialize $C\gets \{[1,1],[1,2],\dots,[1,t]\}$.
            \While{$h$ and even $i$ exist s.t.\ $[h,i-1],[h,i]\in C$}
            \State Remove $[h,i-1],[h,i]$ from $C$. 
            \State Add $[h+1,i / 2]$ to $C$. 
            \EndWhile\label{step:C-stop}
            \State $\fpriv_t\gets \frac{e^{\eps/2}+1}{e^{\eps/2}-1} k(\log_2d)\cdot\sum_{[h,i]\in C} T_{\sf sum}[h,i]$
            \State \textbf{report} privately estimated marginal $\fpriv_t$
        \EndFor
    \end{algorithmic}
\end{algorithm}

\begin{thm}[Privacy]
    The sequence of $d$  outputs of $\cA_{\sf client}$ (Algorithm~\ref{alg:marginalCountClientSideOnline}) satisfies $\eps$-local differential privacy. \label{thm:ldpLongPriv}
\end{thm}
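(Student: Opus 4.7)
The output of $\cA_{\sf client}$ on input $\vec{x}$ is a length-$d$ sequence in which a nonempty report $(h^*, t, u)$ is emitted exactly at times $t$ divisible by $2^{h^*-1}$. Because the sampling of $h^* \uniffrom [\log_2 d + 1]$ and $\kappa^* \uniffrom [k]$ in \textsc{Setup} is independent of $\vec{x}$, the observable transcript factors as $(h^*, \vec{U})$ with $\vec{U} = (U_1, \ldots, U_m) \in \{-1, +1\}^m$ and $m = d / 2^{h^*-1}$. My plan is to condition on $(h^*, \kappa^*)$, bound the divergence of $\vec{U}$ between any two inputs $\vec{x}, \vec{x}'$ with $\|\vec{x}\|_0, \|\vec{x}'\|_0 \leq k$, and then unwind the sampling using quasi-convexity. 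I expect the conditional bound to be $\eps/2$; monotonicity will then upgrade this to the stated $\eps$-LDP.

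The first step is to describe the conditional law of $\vec{U}$ given $(\vec{x}, h^*, \kappa^*)$. If $\kappa^* > \|\vec{x}\|_0$, the counter $c$ stays at $0$ throughout and every $U_j$ is drawn uniformly from $\{-1, +1\}$ via the $c = 0$ branch. Otherwise, letting $t_0$ be the position of the $\kappa^*$-th nonzero entry of $\vec{x}$, $c^* = x[t_0] \in \{-1, +1\}$, and $i^* = \lceil t_0 / 2^{h^*-1} \rceil$, the coordinate $U_{i^*}$ is a randomized response to $c^*$ with bias $p_+ = e^{\eps/2}/(1+e^{\eps/2})$ (write $p_- := 1-p_+$), while the remaining $U_j$ are i.i.d.\ uniform. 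In either subcase, $\vec{U}$ deviates from the uniform distribution on $\{-1, +1\}^m$ in at most one coordinate.

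Second, one bounds the pointwise ratio of $\Pr[\vec{U} = \vec{u} \mid \vec{x}, h^*, \kappa^*]$ to the analogous quantity under $\vec{x}'$. Writing both probabilities as products of coordinate marginals, every common $1/2$ factor cancels and only the (at most two) coordinates $i^*_{\vec{x}}$ and $i^*_{\vec{x}'}$ contribute. A case analysis on whether zero, one, or both of the inputs admit a valid $\kappa^*$-th change---and, in the last case, whether $i^*_{\vec{x}} = i^*_{\vec{x}'}$---shows that the ratio always simplifies to $1$, to $2 p_v$ or $1/(2 p_v)$, or to $p_{v_1}/p_{v_2}$, for some $v, v_1, v_2 \in \{+,-\}$. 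The elementary inequalities $p_+/p_- = e^{\eps/2}$ and $e^{-\eps/2} \leq 2 p_\pm \leq e^{\eps/2}$---each equivalent to $e^{\eps/2} \geq 1$---place every such ratio in $[e^{-\eps/2}, e^{\eps/2}]$, giving the conditional $(\eps/2, 0)$-closeness of $\vec{U}$ under $\vec{x}$ and $\vec{x}'$.

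Finally, applying quasi-convexity of $(\eps/2, 0)$-closeness to the uniform mixture over $\kappa^*$ and then to the uniform mixture over $h^*$ (both mixing weights are independent of the input) yields $\cA_{\sf client}(\vec{x}) \edclose{\eps/2}{0} \cA_{\sf client}(\vec{x}')$, and monotonicity delivers the claimed $\eps$-LDP. The main subtlety I expect is the boundary case where exactly one of the two inputs contributes a randomized-response coordinate: the cancellation of common $1/2$ factors is then one-sided, the ratio becomes $2 p_\pm$ rather than $p_\pm / p_\mp$, and one has to verify by hand that $2 p_\pm$ still lies within $[e^{-\eps/2}, e^{\eps/2}]$---this is the only place a factor of $2$ enters the argument, and it is what forces the analysis to pass through the inequalities cited above rather than a direct appeal to the $p_+/p_-$ bound.
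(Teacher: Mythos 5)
Your proof is correct and follows the same basic skeleton as the paper's: fix $\kappa^*$ and $h^*$ (noting both are independent of the input and that the timing of reports is determined solely by $h^*$), analyze the randomized-response transcript $\vec U$ at level $h^*$, and then recombine over the mixture. The difference is in the middle step. The paper's proof argues only that the true level-$h^*$ node vector has sensitivity 2 between any pair of inputs (each changed coordinate moving by at most one), and then invokes a generic randomized-response argument: two coordinates times $\eps/2$ per coordinate yields $\eps$-LDP. Your analysis instead tracks the likelihood ratio coordinate-by-coordinate and exploits the special \emph{structure} of the change---when the two special indices $i^*_{\vec{x}}$ and $i^*_{\vec{x}'}$ differ, the two contributing factors are reciprocal in form ($2p_{v_1}$ and $1/(2p_{v_2})$), so their product collapses to $p_{v_1}/p_{v_2} \le e^{\eps/2}$ rather than the naive $e^{\eps/2}\cdot e^{\eps/2}=e^\eps$. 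Together with the one-sided boundary cases you flag ($2p_\pm$ and $1/(2p_\pm)$, all shown to lie in $[e^{-\eps/2},e^{\eps/2}]$ via $e^{\eps/2}\ge 1$), this establishes that the transcript is actually $(\eps/2,0)$-DP conditionally, hence $(\eps/2)$-LDP overall. This is a genuinely tighter conclusion than the one the paper derives; the stated theorem then follows by monotonicity. In other words, your proof is not just correct but shows the algorithm (with its $e^{\eps/2}/(1+e^{\eps/2})$ bias) is conservative by a factor of two in its privacy budget---the paper's sensitivity-composition bound is loose precisely because a coordinate can only go from $0$ to $\pm1$ (or from $\pm1$ to $\mp1$), never both increase, so the two per-coordinate ratios never compound in the worst direction simultaneously.
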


\begin{proof}
To show the local differential privacy of the outputs, we consider two aspects of the client's reports: (1) the (randomized) values at the nodes output in Step~\ref{step:report-online}, and (2) the timing of the report. The latter entirely depends on the choice of $h^*$ which is independent of the client's underlying data record and hence does not affect the privacy analysis. Furthermore, $\kappa^*$ is also independently sampled and for the rest of the proof, we fix $\kappa^*$ and $h^*$ and focus only on the randomized values output in Step~\ref{step:report-online}.

By construction, the client chooses only the $\kappa^*$'th change to report on. This would imply that two inputs would affect at most two nodes at level $h^*$ with each of the values changing by at most one. 

This bound on the sensitivity of the report enables us to use standard arguments for randomized response mechanisms~\cite{rappor} to Steps~\ref{step:rr1-online} and~\ref{step:rr2-online} to show that the noisy values $T[h^*,1], \ldots, T[h^*,H^*]$ satisfy $\eps$-local differential privacy.


\end{proof}


\begin{thm}[Utility] For $\eps \leq 1$, with probability at least $2/3$ over the randomness of $\cA_{\sf client}$ run on $n$ data records, the outputs $\fpriv_1,\dots,\fpriv_d$ of $\cA_{\sf server}$ satisfy:
\begin{equation*} 
\max_{t\in[d]}\left|f_t-\fpriv_t\right|=O\left(c_\eps(\log d)^2 k\sqrt{n}\right),
\end{equation*} where $c_\eps=\frac{e^{\eps/2}+1}{e^{\eps/2}-1}$.
    \label{thm:ldpLongUtil}
\end{thm}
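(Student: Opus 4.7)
The plan is to show that $\fpriv_t$ is an (essentially) unbiased estimator of $f_t$ and then bound its stochastic deviation via a concentration inequality, followed by a union bound over $t\in[d]$.

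First I would compute $\Ex[\fpriv_t]$. By construction, client $i$ contributes to $T_\mathsf{sum}[h,j]$ only when $h^*=h$. Conditional on this, the report at time $j\cdot 2^{h-1}$ is either a uniform $\pm 1$ value (if $c=0$ at the report time) or a randomized response of $x_i[\ell_{\kappa^*}]$ (if $c\neq 0$, i.e., the $\kappa^*$-th change has already happened and has not yet been reported at level $h^*$). The uniform fillers have zero mean, so the only non-zero mean contribution is $\frac{e^{\eps/2}-1}{e^{\eps/2}+1}\cdot x_i[\ell_{\kappa^*}]$ whenever $\ell_{\kappa^*}$ lies in the leaf interval $I_{h,j}$ under the node $[h,j]$. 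Averaging over $\kappa^*\uniffrom[k]$ and $h^*\uniffrom[\log_2 d+1]$, the per-client expected contribution to $T_\mathsf{sum}[h,j]$ works out to $\frac{1}{c_\eps\cdot k\cdot(\log_2 d+1)}\cdot y_i[h,j]$, where $y_i[h,j]=\sum_{\ell\in I_{h,j}}x_i[\ell]$ is the net change of user $i$ in that interval. Summing over clients and over $[h,j]\in C_t$ (which is a disjoint cover of $[1,t]$), together with the scaling $c_\eps k(\log_2 d)$ applied by $\cA_\mathsf{server}$, yields $\Ex[\fpriv_t]=f_t$ (up to a harmless $\log_2 d$ vs.\ $\log_2 d+1$ factor absorbed into the $O(\cdot)$).

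Next I would bound the deviation. The crucial structural observation is that any dyadic cover $C_t$ contains at most one node per level of the tree. Since each client reports only at their (randomly chosen) level $h^*$, each client therefore contributes to \emph{at most one} node in $C_t$, with a single value in $\{-1,+1\}$. Consequently, writing $\fpriv_t - \Ex[\fpriv_t] = c_\eps\cdot k(\log_2 d)\cdot \sum_{i=1}^n V_i$, the per-client terms $V_i$ are independent with $|V_i|\le 1$ (and moreover have variance at most $|C_t|/(\log_2 d+1)\le 1$). By Hoeffding (or Bernstein), for each fixed $t$,
\[
\Pr\!\left[\,\left|\,\textstyle\sum_i V_i - \Ex[\sum_i V_i]\,\right|>\lambda\,\right]\le 2\exp\!\left(-\lambda^2/(2n)\right).
\]
Choosing $\lambda=\Theta(\sqrt{n\log d})$ makes the right-hand side at most $1/(3d)$, so after multiplying through by the scaling $c_\eps k(\log_2 d)$ we obtain $|\fpriv_t-f_t|=O(c_\eps k(\log d)^{3/2}\sqrt{n})$ for that $t$ with probability $\ge 1-1/(3d)$. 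A union bound over the $d$ values of $t$ gives the max-over-$t$ bound with probability at least $2/3$; this sits comfortably inside the stated $O(c_\eps k(\log d)^2\sqrt{n})$ bound.

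The main obstacle is the bookkeeping needed to verify the expectation computation, since there are multiple sources of randomness at each client ($\kappa^*$, $h^*$, the randomized-response bit $b$, and the uniform-$\pm 1$ filler), and the behavior of $c$ across time steps is path-dependent; one must carefully check that the filler reports contribute zero mean, and that the subsampling corrections $1/k$ and $1/(\log_2 d+1)$ assemble into the global scaling used by the server. Once unbiasedness is established, the bounded-contribution property (each client touches at most one node in $C_t$) makes the concentration step routine.
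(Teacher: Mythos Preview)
Your proof is correct and follows the same high-level strategy as the paper (show unbiasedness of $\fpriv_t$, then concentrate and union bound), but you organize the concentration step differently. The paper concentrates each tree node $T_{\sf sum}[h,j]$ separately---using that a client contributes to it only when $h^*=h$, which happens with probability $1/(\log_2 d+1)$, so the per-node variance is $O(n/\log d)$---then takes a union bound over all $O(d)$ nodes and finally sums the at-most-$\log_2 d$ node errors in the cover $C$ via the triangle inequality. You instead concentrate the entire sum $\sum_{[h,j]\in C_t} T_{\sf sum}[h,j]$ in one shot, relying on the structural observation (which the paper does not state) that the dyadic cover $C_t$ of $[1,t]$ contains at most one node per level, so each client contributes a single $\{-1,0,+1\}$ term to the whole sum. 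Your route is slightly tighter---it avoids the $\log d$-fold triangle inequality and already lands at $O(c_\eps k(\log d)^{3/2}\sqrt{n})$---whereas the paper's node-by-node argument is more modular (a single union bound over nodes serves all $t$ at once) but pays an extra $\sqrt{\log d}$ when summing the $|C|$ per-node deviations. Both fit comfortably inside the stated $O(c_\eps k(\log d)^2\sqrt{n})$ bound.
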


\begin{proof}

The leaves of the binary tree with nodes $T[h,t]$ in Algorithms~\ref{alg:marginalCountClientSideOnline} and~\ref{alg:marginalCountServerSideOnline} comprise events in each of the time periods $[1, d]$. The marginal counts $f_t$ and $\fpriv_t$ comprise the exact (resp., approximate) number of events observed by all clients in the time period $[1,t]$.

Consider the set $C$ constructed in Steps~\ref{step:C-start}--\ref{step:C-stop} of Algorithm~\ref{alg:marginalCountServerSideOnline}. We observe that $C$ satisfies the following properties:
\begin{itemize}
    \item The set $C$ is uniquely defined, i.e., it is independent of the order in which pairs of intervals are selected in the \textbf{while} loop.
    \item The size of the set $|C|$ is at most $\log_2d$.
    \item The leaf nodes of the subtrees $[h,i]$ in $C$ partition $[1,t]$.
    \end{itemize}

The marginal counts $f_t$ totaling events in the interval~$[1, t]$ can be computed by adding the corresponding~$\log_2 d$ nodes in the tree whose subtrees partition the interval~$[1,t]$.

It follows that largest error in any $f_t$ is at most $\log_2 d$ times the largest error in any of the subtree counts. We proceed to bound that error.

For a node $[h,j]$ in the tree, let $S[h,j]$ denote the sum $\sum_{i}\sum_{t \in [h,j]} x_i[t]$, i.e., the actual sum of $x_i[t]$ values in the subtree at $[h, j]$, summed over all $i$. Let $z_i^{[h,j]} = \sum_{t \in [h,j]} x_i[t]$ denote the contribution of client $i$ to $S[h,j]$. We will argue that $|S[h,j] - c_\eps k\log_2(d)\cdot T[h,j]|$ is small with high probability, which would then imply the bound on $|f_t - \fpriv_t|$.

Towards that goal, for a client $i$, and node $[h, j]$, let $u_{i}^{[h,j]}$ be the contribution of client $i$ to the sum $T[h,j]$ computed by the server. Thus for any $h \neq h^*_i$, this value is zero, and for $h = h^*_i$, $u_i^{[h,j]}$ is a randomized response as defined in steps \ref{step:c=0}--\ref{step:rr2-online} of Algorithm~\ref{alg:marginalCountClientSideOnline}. Clearly $u_i^{[h,j]} \in \{-1, 0, 1\}$. We next argue that $c_\eps k\log_2(d)\Ex[u_i^{[h,j]}]$ is  equal to~$z_i^{[h, j]}$. Indeed, for a specific $x_i[t]$ to have an effect on $T[h,j]$, we should have this be the $\kappa^*$'th non-zero entry in $x_i$ (which happens with probability~$\frac 1 k$). Further, $h$ should equal $h^*_i$ (which happens with probability~$\frac 1 {\log_2 d}$). Conditioned on these two events, the expected value of $u_i^{[h,j]}$ is determined in step~\ref{step:bernoulli} as $x_i[t]$ multiplied by $2\frac{e^{\eps/2}}{e^{\eps/2}+1} - 1 =\frac1{c_\eps}$. When one of these two events fails to happen, the expected value of $u_i^{[h,j]}$ is determined in line~\ref{step:pm1}, and is zero. It follows that the expectation of $u_i^{[h,j]}$ is as claimed and thus the expectation of $T[h,j]$ is exactly $\frac{S[h,j]}{c_\eps k\log_2(d)}$.

To complete the proof, we note that $T[h,j]$ is the sum of independent (for each $i$) random variables that come from the range $[-1,1]$. Standard Chernoff bounds then imply that $T[h,j]$ differs from its expectation by at most $c\sqrt{3\Ex[T[h,j]]\log \frac 1 {\beta'}}$, except with probability $\beta'$. This expectation is bounded by $O_{\eps}(n /\log_2 d)$. Setting $\beta' = \beta/2d$, and taking a union bound, we conclude that except with probability $\beta$:
\begin{align*}
    \forall [h,j]\colon |T[h, j] - \Ex[T[h,j]]| \leq c_{\eps} \sqrt{n\log \frac {2d} \beta / \log_2 d}.
\end{align*}
Scaling by $c'_{\eps} k\log_2 d$, and multiplying by $\log_2 d$ to account for the size of $C$, we conclude that except with probability $\beta$,
\begin{align*}
    \forall t \in [d]\colon |f_t - \fpriv_t| \leq c_{\eps} k (\log_2 d)^{\frac 3 2} \sqrt{n \log \frac {2d}{\beta}}.
\end{align*}
The claim follows.
\end{proof}

\section{Privacy Amplification via Shuffling}
\label{sec:privLDPMix}

\newcommand{\Apost}{{\cA_{\mathsf{post}}}}
\newcommand{\Asl}{{\cA_{\mathsf{sl}}}}
\newcommand{\Aloc}{{\cA_{\mathsf{local}}}}
\newcommand{\Asw}{{\cA_{\mathsf{swap}}}}
\newcommand{\Aldp}{\cA_{\mathsf{ldp}}}
\newcommand{\Balg}{\mathcal{B}}

Local differential privacy holds against a strictly more powerful adversary than central differential privacy, namely one that can examine all communications with a particular user. What can we say about central DP guarantees of a protocol that satisfies $\eps$-local DP? Since local DP implies central DP, this procedure satisfies $\eps$-central DP. Moreover, without making additional assumptions, we cannot make any stronger statement than that. Indeed, the central mechanism may release the entire transcripts of its communications with all its users annotated with their identities, leaving their local differential privacy as the only check on information leakage via the mechanism's outputs.

We show that, at a modest cost and with little changes to its data collection pipeline, the analyst can achieve much better central DP guarantees than those suggested by this conservative analysis. Specifically, by shuffling (applying a random permutation), the analyst may \emph{amplify} local differential privacy by a $\tilde{\Theta}(\sqrt{n})$ factor.

To make the discussion more formal we define the following general form of locally differentially private algorithms that allows picking local randomizers sequentially on the basis of answers from the previous randomizers (Algorithm \ref{alg:local}).
\begin{algorithm}[!htbp]
    \caption{\textbf{:} $\Aloc$: Local Responses}
    \begin{algorithmic}[1]
        \Require Dataset $D = x_{1:n}$. Algorithms $\Aldp^{(i)}\colon \cS^{(1)}\times \cdots \times \cS^{(i-1)} \times  \cD\to\cS^{(i)}$ for $i\in [n]$.
        \For{$i$ \textbf{in} $[n]$}
        \State $z_i \leftarrow \Aldp^{(i)}(z_{1:i-1}; x_i)$
       \EndFor
       \State \Return sequence $z_{1:n}$
    \end{algorithmic}
    \label{alg:local}
\end{algorithm}

Our main amplification result is for Algorithm \ref{alg:shuffle} that applies the local randomizers {\em after} randomly permuting the elements of the dataset. For algorithms that use a single fixed local randomizer it does not matter whether the data elements are shuffled before or after the application of the randomizer.
We will discuss the corollaries of this result for algorithms that shuffle the responses later in this section.
\begin{algorithm}[!htbp]
    \caption{\textbf{:} $\Asl$: Local Responses with Shuffling}
    \begin{algorithmic}[1]
        \Require Dataset $D = x_{1:n}$. Algorithms $\Aldp^{(i)}\colon \cS^{(1)}\times \cdots \times \cS^{(i-1)} \times \cD\to\cS^{(i)}$ for $i\in [n]$.
        \State Let $\pi$ be a uniformly random permutation of $[n]$.
        \State  $\pi(D) \gets (x_{\pi(1)},x_{\pi(2)},\ldots,x_{\pi(n)})$
        \State \Return $\Aloc(\pi(D))$
    \end{algorithmic}
    \label{alg:shuffle}
\end{algorithm}
\begin{thm}[Amplification by shuffling]
For a domain $\cD$, let $\Aldp^{(i)}\colon \cS^{(1)}\times \cdots \times \cS^{(i-1)} \times  \cD\to\cS^{(i)}$ for $i \in [n]$ (where $\cS^{(i)}$ is the range space of $\Aldp^{(i)}$) be a sequence of algorithms such that $\Aldp^{(i)}$ is $\eps_0$-differentially private for all values of auxiliary inputs in $\cS^{(1)}\times \cdots \times \cS^{(i-1)}$. Let $\Asl \colon\cD^n\to \cS^{(1)}\times \cdots \times \cS^{(n)}$ be the algorithm that given a dataset $x_{1:n} \in \cD^n$,  samples a uniform random permutation $\pi$ over $[n]$, then sequentially computes $z_i = \Aldp(z_{1:i-1}, x_{\pi(i)})$ for $i=1,2,\ldots,n$ and outputs $z_{1:n}$ (see Algorithm~\ref{alg:shuffle}). For any integer $n>1$, $\eps_0,\delta>0$, $\Asl$ satisfies $(\eps,\delta)$-differential privacy in the central model, where $\eps \leq  \eps_1\sqrt{2 n \log(1/\delta)} + n \eps_1 (e^{\eps_1}-1)$ for $\eps_1 = 2e^{2\eps_0}(e^{\eps_0}-1)/n$. In particular,
for $\eps_0 \leq \ln(n/4)/3$, $\eps \leq e^{2\eps_0}(e^{\eps_0}-1)\sqrt{\frac{8\log(1/\delta)}{n}} + 6 e^{4\eps_0}(e^{\eps_0}-1)^2/n$ and for any $n\geq 1000$, $0<\eps_0<1/2$ and $0<\delta<1/100$, $\eps \leq 12 \eps_0 \sqrt{\frac{\log(1/\delta)}{n}}$.
\label{thm:local-to-central}
\end{thm}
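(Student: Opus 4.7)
The plan is to view $\Asl$ as an adaptive composition of $n$ steps and apply advanced composition (Theorem~\ref{thm:composition}). If we can show that, conditional on any history, the $i$-th step produces outputs on the adjacent datasets $D$ and $D'$ that are $(\eps_1, 0)$-DP close for $\eps_1 = 2e^{2\eps_0}(e^{\eps_0}-1)/n$, then the advanced composition theorem immediately gives the first bound $\eps \le \eps_1\sqrt{2n\log(1/\delta)} + n\eps_1(e^{\eps_1}-1)$; the two subsequent bounds follow by elementary estimates ($e^{\eps_0}-1 \le 2\eps_0$ for small $\eps_0$, together with routine manipulations under the stated parameter regimes).

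Without loss of generality, I would take $D = x_{1:n}$ and $D' = (x'_1, x_{2:n})$ differing only at index~$1$, and view $\Asl$ as drawing $J_i = \pi(i)$ sequentially, uniformly from the remaining unprocessed indices, and outputting $z_i = \Aldp^{(i)}(z_{1:i-1}, x_{J_i})$ at each step. For any fixed history $z_{1:i-1}$, decompose the conditional output distribution as
\begin{equation*}
\mu_i^D = q_i \cdot R^{(i)}(\cdot \mid x_1, z_{1:i-1}) + (1 - q_i) \cdot \bar R_i^D,
\end{equation*}
where $q_i = \Pr[J_i = 1 \mid z_{1:i-1}, D]$ is the posterior probability that step~$i$ processes the differing index, and $\bar R_i^D$ is the conditional output distribution over the remaining indices; an analogous decomposition holds for $\mu_i^{D'}$. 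Since $R^{(i)}$ is $\eps_0$-LDP, quasi-convexity gives $R^{(i)}(\cdot \mid x_1) \edclose{\eps_0}{0} \bar R_i^D$, and applying Lemma~\ref{lem:pas} yields $\mu_i^D \edclose{q_i(e^{\eps_0}-1)}{0} \bar R_i^D$. A symmetric argument for $D'$, combined with a short bound on the distance between $\bar R_i^D$ and $\bar R_i^{D'}$ (they average the same per-index distributions but with possibly different posterior weights), would produce the required per-step $(\eps_1, 0)$-DP bound via the triangle inequality.

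The main obstacle is establishing the uniform per-step bound of order $1/n$ rather than the $1/(n-i+1)$ one naively obtains from the prior distribution of $J_i$ (which degenerates for $i$ close to $n$ and would only compose to an overall $\sqrt{\log(1/\delta)}$ bound, missing the crucial $\sqrt{n}$ amplification). The key claim is that $q_i \le e^{\eps_0}/n$ uniformly over histories, proven via a Bayes-rule argument supported by a combinatorial coupling: swapping index~$1$ with any alternative remaining index in the permutation changes the input of only a single earlier step, so by $\eps_0$-LDP the likelihood of the observed history shifts by at most a factor of $e^{\eps_0}$, and normalizing over the $n$ possible placements of index~$1$ then gives the desired bound. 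A parallel argument controls the shift in posterior weights defining $\bar R_i^D$ versus $\bar R_i^{D'}$, which is what produces the additional $e^{\eps_0}$ factor in $\eps_1$. Additional care is needed for the corner case in which index~$1$ has already been processed in the history (so $q_i = 0$ and the step is trivially private), and for rigorously combining the individual closeness bounds under the triangle inequality in a way that preserves the desired $O(1/n)$ scaling per step rather than leaving a residual $O(\eps_0)$ term.
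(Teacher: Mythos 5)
The high-level skeleton of your argument (per-step decomposition into a $q_i$-weighted mixture, privacy amplification by subsampling per step, then advanced composition) matches the paper's, and your Bayes/coupling argument that $q_i \le e^{\eps_0}/n$ is essentially sound. However there is a genuine gap in the part you describe as ``a short bound on the distance between $\bar R_i^D$ and $\bar R_i^{D'}$,'' and this gap is not easily closed within the direct-shuffle decomposition you have set up.

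The problem is that $\bar R_i^D$ and $\bar R_i^{D'}$ are mixtures of the \emph{same} per-index kernels $R^{(i)}(\cdot\mid x_j,z_{1:i-1})$ ($j\neq 1$), but with \emph{posterior} weights $\Pr[\pi(i)=j\mid z_{1:i-1},D,\pi(i)\neq1]$ versus $\Pr[\pi(i)=j\mid z_{1:i-1},D',\pi(i)\neq1]$, and these posteriors depend on whether the earlier steps processed $x_1$ or $x'_1$. A Bayes-rule/coupling argument of the same flavor as your $q_i$ bound shows these weight ratios lie in $[e^{-2\eps_0},e^{2\eps_0}]$; but that only yields $\bar R_i^D \edclose{2\eps_0}{0}\bar R_i^{D'}$, an $O(\eps_0)$ bound rather than the $O(\eps_0/n)$ bound you need. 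Plugged into your triangle inequality, this dominates the $q_i(e^{\eps_0}-1)=O(\eps_0/n)$ term and destroys the $\sqrt{n}$ amplification after advanced composition. There is no obvious cancellation: a pointwise (over all histories $z_{1:i-1}$) bound of $O(\eps_0/n)$ on $\bar R_i^D$ versus $\bar R_i^{D'}$ is not available because a single informative prefix can shift the posterior weights by a constant multiplicative factor. So the ``additional $e^{\eps_0}$ factor in $\eps_1$'' does not in fact come from this term as you suggest; in the correct argument it comes from a group-privacy bound on $q_i$ itself.

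The missing idea, and what the paper actually does, is to first reduce $\Asl$ to the simpler algorithm $\Asw$ that swaps element $1$ with a single uniformly random element $I$, then runs $\Aloc$. For any $i^\ast$, one shows $\Asl(D)$ has the same distribution as $\Asw$ applied to a dataset where $x_{i^\ast}$ has been moved to position $1$ and the remaining elements randomly arranged; thus it suffices to prove DP of $\Asw$ at index $1$. The crucial payoff is that in $\Asw$, conditioned on $T\neq i$, position $i$ always holds the \emph{original} $x_i$, so the ``not-swapped'' conditional distribution $\mu_0$ is \emph{identical} for $D$ and $D'$ --- precisely the equality $\bar R_i^D=\bar R_i^{D'}$ that fails in your direct decomposition. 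One then only needs the bound $\Pr[T=i\mid Z_{1:i-1}]\le e^{2\eps_0}/n$ (two-position group privacy, since $\sigma_i(D)$ and $\sigma_j(D)$ differ in at most two of the first $i-1$ coordinates), and everything composes. I would recommend you introduce the single-swap algorithm as an intermediate step: without it, your per-step triangle inequality has an uncontrolled $O(\eps_0)$ residual and the proof does not go through.
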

We remark that while we state our amplification result for local randomizers operating on a single data element, the result extends immediately to arbitrary $\eps_0$-DP algorithms that operate on disjoint batches of data elements. Also note that for $\eps_0 > \ln(n/2)/3$,  the resulting bound on $\eps$ is larger than $\eps_0$ and thus our bound does not provide any amplification.

A natural approach to proving this result is to use privacy amplification via subsampling (Lemma \ref{lem:pas}). At step~$i$ in the algorithm,  conditioned on the values of $\pi(1), \ldots, \pi(i-1)$, the random variable $\pi(i)$ is uniform over the remaining $(n-i+1)$ indices. Thus the $i$'th step will be $O((e^{\eps_0}-1)/(n-i+1))$-DP. The obvious issue with this argument is that it gives very little amplification for values of $i$ that are close to $n$, falling short of our main goal. It also unclear how to formalize the intuition behind this argument.

Instead our approach crucially relies on a reduction to analysis of the algorithm $\Asw$ that swaps the first element in the dataset with a uniformly sampled element in the dataset before applying the local randomizers (Algorithm \ref{alg:swap}). We show that $\Asw$ has the desired privacy parameters for the first element (that is, satisfies the guarantees of differential privacy only for pairs of datasets that differ in the first element). We then show that for every index~$i^\ast$, $\Asl$ can be decomposed into a random permutation that maps element~$i^\ast$ to be the first element followed by~$\Asw$. This implies that the algorithm~$\Asl$ will satisfy differential privacy at~$i^\ast$.

To argue about the privacy properties of the $\Asw$ we decompose it into a sequence of algorithms, each producing one output symbol (given the dataset and all the previous outputs). It is not hard to see that the output distribution of the $i$'th algorithm is a mixture $\mu = (1-p)\mu_0 + p\mu_1$, where $\mu_0$ does not depend on $x_1$ (the first element of the dataset) and $\mu_1$ is the output distribution of the $i$'th local randomizer applied to~$x_1$. We then demonstrate that the probability $p$ is upper bounded by $e^{2\eps_0}/n$.
Hence, using amplification by subsampling, we obtain that $\mu$ is close to $\mu_0$. The distribution $\mu_0$ does not depend on $x_1$ and therefore, by the same argument, $\mu_0$ (and hence $\mu$) is close to the output distribution of the $i$'th algorithm on any dataset $D'$ that differs from $D$ only in the first element. Applying advanced composition to the sequence of algorithms gives the claim.

We now provide the full details of the proof starting with the description and analysis of $\Asw$.
\begin{algorithm}[!htbp]
    \caption{\textbf{:} $\Asw$: Local responses with one swap}
    \begin{algorithmic}[1]
        \Require Dataset $D = x_{1:n}$. Algorithms $\Aldp^{(i)}\colon \cS^{(1)}\times \cdots \times \cS^{(i-1)} \times \cD\to\cS^{(i)}$ for $i\in [n]$.
        \State Sample $I\uniffrom [n]$
        \State Let $\sigma_I(D)\gets (x_{I},x_2,\dots,x_{I-1},x_1,x_{I+1},\dots,x_n)$
        \State\Return $\Aloc(\sigma_I(D))$
    \end{algorithmic}
    \label{alg:swap}
\end{algorithm}
\begin{thm}[Amplification by swapping]
For a domain $\cD$, let $\Aldp^{(i)}\colon \cS^{(1)}\times \cdots \times \cS^{(i-1)} \times  \cD\to\cS^{(i)}$ for $i \in [n]$ (where $\cS^{(i)}$ is the range space of $\Aldp^{(i)}$) be a sequence of algorithms such that $\Aldp^{(i)}$ is $\eps_0$-differentially private for all values of auxiliary inputs in $\cS^{(1)}\times \cdots \times \cS^{(i-1)}$. Let $\Asw \colon\cD^n\to \cS^{(1)}\times \cdots \times \cS^{(n)}$ be the algorithm that given a dataset $D=x_{1:n} \in \cD^n$,  samples a uniform index $I \in [n]$, swaps element $1$ with element $I$ and then applies the local randomizers to the resulting dataset sequentially (see Algorithm~\ref{alg:swap}).
For any integer $n>1$, $\eps_0,\delta>0$, $\Asw$ satisfies $(\eps,\delta)$-differential privacy at index $1$ in the central model, where $\eps \leq  \eps_1\sqrt{2 n \log(1/\delta)} + n \eps_1 (e^{\eps_1}-1)$ for $\eps_1 = 2e^{2\eps_0}(e^{\eps_0}-1)/n$.
\label{thm:local-to-centra-swap}
\end{thm}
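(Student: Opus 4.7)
The plan is to decompose $\Asw$ into a sequence of $n$ one-step conditional algorithms, bound the privacy loss of each, and then combine via advanced composition. Fix two datasets $D = x_{1:n}$ and $D'$ differing only at index $1$. For each $i \in [n]$ and each history $z_{1:i-1}$, let $\mu^{(D)}_i(\,\cdot\mid z_{1:i-1})$ denote the conditional distribution of the $i$'th output of $\Asw$ run on $D$. The key structural observation is that under the swap $\sigma_I$, the value $x_1$ ends up at coordinate $i$ exactly when $I = i$; in every other case the $i$'th coordinate of the swapped dataset equals $x_i$. Consequently
$$\mu^{(D)}_i(\,\cdot\mid z_{1:i-1}) \;=\; (1 - p_i^D)\,\Aldp^{(i)}(z_{1:i-1}, x_i) \;+\; p_i^D\,\Aldp^{(i)}(z_{1:i-1}, x_1),$$
where $p_i^D = \Pr[I = i \mid z_{1:i-1}]$ is the posterior on the swap index. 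The first component does not depend on $x_1$. (For $i = 1$ the decomposition is immediate from $y_1 = x_I$, with $p_1 = 1/n$ and the $x_1$-independent component equal to the uniform mixture of $\Aldp^{(1)}(x_k)$ over $k \neq 1$.)

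The next step is to prove $p_i^D \leq e^{2\eps_0}/n$ uniformly in $z_{1:i-1}$ and $D$. By Bayes,
$$p_i^D \;=\; \frac{\Pr[z_{1:i-1}\mid I = i]}{\sum_{j=1}^{n}\Pr[z_{1:i-1}\mid I = j]}.$$
Writing each conditional likelihood as the product $\prod_{k<i}\Aldp^{(k)}(z_{1:k-1}, y_k)$ with $y_k$ the $k$'th coordinate of the swapped dataset, the inputs $\sigma_i(D)$ and $\sigma_j(D)$ agree on every coordinate $k < i$ except at most at position $1$ (whenever $j \neq i$) and at position $j$ (only when $1 < j < i$). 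Applying the $\eps_0$-DP guarantee of $\Aldp^{(k)}$ at each differing coordinate yields $\Pr[z_{1:i-1}\mid I = j] \geq e^{-2\eps_0}\,\Pr[z_{1:i-1}\mid I = i]$ for every $j$. Summing lower bounds the denominator by $n e^{-2\eps_0}\,\Pr[z_{1:i-1}\mid I = i]$, which gives the desired bound on $p_i^D$.

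Given the bound on $p_i^D$, I would invoke Lemma~\ref{lem:pas} to conclude that $\mu^{(D)}_i(\,\cdot\mid z_{1:i-1})$ is $(\eps_1/2, 0)$-close to the $x_1$-independent reference distribution $\Aldp^{(i)}(z_{1:i-1}, x_i)$, with $\eps_1/2 \leq p_i^D(e^{\eps_0}-1) \leq e^{2\eps_0}(e^{\eps_0}-1)/n$. The identical bound holds for $\mu^{(D')}_i$ relative to the same reference, so the triangle inequality delivers $\mu^{(D)}_i(\,\cdot\mid z_{1:i-1}) \edclose{\eps_1}{0} \mu^{(D')}_i(\,\cdot\mid z_{1:i-1})$ with $\eps_1 = 2e^{2\eps_0}(e^{\eps_0}-1)/n$. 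Since this holds uniformly in $z_{1:i-1}$, each conditional step is an $(\eps_1, 0)$-DP mechanism at index $1$, and Theorem~\ref{thm:composition} applied to the adaptive composition of the $n$ conditionals produces the claimed $(\eps, \delta)$ guarantee.

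I expect the main obstacle to be the uniform bound $p_i^D \leq e^{2\eps_0}/n$: the naive denominator estimate (keeping only the $j=1$ term) throws away the factor of $n$ entirely, so one must exploit that the likelihoods $\Pr[z_{1:i-1}\mid I = j]$ all lie within a multiplicative $e^{2\eps_0}$ of one another across every $j \in [n]$. This in turn requires carefully counting how many coordinates of $\sigma_j(D)$ can disagree with $\sigma_i(D)$ among the first $i-1$ positions, and verifying that the count is at most two regardless of $j$. Once this bound is secured, subsampling amplification, the triangle inequality, and advanced composition combine mechanically to yield the stated parameters.
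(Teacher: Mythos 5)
Your proposal is correct and follows essentially the same route as the paper's proof: decompose $\Asw$ into the $n$ conditional one-step mechanisms, write each step's output as a mixture $(1-p_i)\Aldp^{(i)}(\cdot,x_i) + p_i\Aldp^{(i)}(\cdot,x_1)$ with $p_i$ the posterior swap probability, bound $p_i \leq e^{2\eps_0}/n$ via a group-privacy argument on the at-most-two coordinates where $\sigma_i(D)$ and $\sigma_j(D)$ can disagree among the first $i-1$ positions, then apply subsampling amplification (Lemma~\ref{lem:pas}), the triangle inequality, and advanced composition. The only cosmetic difference is that you bound the Bayes denominator by summing the $n$ lower bounds directly, whereas the paper phrases the same step via quasi-convexity of $(\eps,\delta)$-DP; these are equivalent.
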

\begin{proof}
The algorithm $\Asw$ defines a joint distribution between $I$ and the corresponding output sequence of~$\Asw$, which we denote by $Z_1,Z_2,\ldots,Z_n$. We first observe that $Z_{1:n}$ can be seen as the output of a sequence of $n$ algorithms with conditionally independent randomness: $\Balg^{(i)}\colon \cS^{(1)}\times \cdots \times \cS^{(i-1)} \times  \cD^n \to\cS^{(i)}$ for $i \in [n]$. On input $s_{1:i-1}$ and $D$, $\Balg^{(i)}$ produces a random sample from the distribution of $Z_i$ conditioned on $Z_{1:i-1} = s_{1:i-1}$. The outputs of $\Balg^{(1)},\ldots,\Balg^{(i-1)}$ are given as the input to $\Balg^{(i)}$. By definition, this ensures that random bits used by $\Balg^{(i)}$ are independent of those used by $\Balg^{(1)},\ldots,\Balg^{(i-1)}$ conditioned on the previous outputs. Therefore, in order to upper bound the privacy parameters of $\Asw$, we can analyze the privacy parameters of $\Balg^{(1)},\ldots,\Balg^{(n)}$ and apply the advanced composition theorem for differential privacy (Theorem \ref{thm:composition}).

Next we observe that, by the definition of $\Asw$, conditioned on $I$, $Z_i$ is the output of $\Aldp^{(i)}(s_{1:i-1}; x)$ with its internal randomness independent of $Z_{1:i-1}$ (where $x$ is the $i$-th element in $\sigma_I(D)$). In other words, for $i \geq 2$, $\Balg^{(i)}$ can equivalently be implemented as follows. First, sample an index $T$ from the distribution of $I$ conditioned on $Z_{1:i-1} = s_{1:i-1}$. Then, if $T=i$ output $\Aldp^{(i)}(s_{1:i-1}; x_1)$. Otherwise, output $\Aldp^{(i)}(s_{1:i-1}, x_i)$. To implement $\Balg^{(1)}$ we sample $T$ uniformly from $[n]$ and then output $\Aldp^{(1)}(x_T)$.

We now prove that for every $i\in [n]$, $\Balg^{(i)}$ is $\left(2e^{\eps_0}(e^{\eps_0}-1)/n,0\right)$-differentially private at index $1$. Let $D=x_{1:n}$ and $D'=(x'_1,x_{2:n})$ be two datasets that differ in the first element. Let $s_{1:i-1}$ denote the input to $\Balg^{(i)}$. We denote by $\mu$ the probability distribution of $\Balg^{(i)}(s_{1:i-1},D)$, denote by $\mu_0$ (or $\mu_1$) the probability distribution of $\Balg^{(i)}(s_{1:i-1},D)$ conditioned on $T\neq i$ ($T=i$, respectively) and by $p$ the probability that $T=i$ (where $T$ is sampled from the distribution of $I$ conditioned on $Z_{1:i-1} = s_{1:i-1}$ as described above). We also denote by $\mu',\mu'_0,\mu'_1$ and $p'$ the corresponding quantities when $\Balg^{(i)}$ is run on $D'$. By the definition, $\mu = (1-p)\mu_0 + p\mu_1$ and $\mu' = (1-p')\mu'_0 + p'\mu'_1$.

For $i=1$, $T$ is uniform over $[n]$ and hence $p=p'=1/n$. Further, $\mu_0$ is equal to $\mu'_0$ (since both are equal to the output distribution of $\Aldp^{(1)}(x_T)$ conditioned on $T\neq 1$). By $\eps_0$-local differential privacy of $\Aldp^{(1)}$ and quasi-convexity of $(\eps,\delta)$-DP we obtain that $\mu_0 \edclose{\eps_0}{0} \mu_1$. Therefore, privacy amplification by subsampling (Lemma~\ref{lem:pas}) implies that $\mu_0 \edclose{(e^\eps_0-1)/n}{0} \mu$. Similarly, we obtain that $\mu'_0 \edclose{(e^\eps_0-1)/n}{0} \mu'$ and therefore, by the triangle inequality,  $\mu \edclose{2(e^\eps_0-1)/n}{0} \mu'$. In other words, $\Balg^{(1)}$ is $(2(e^{\eps_0}-1)/n,0)$-differentially private at index $1$.

For $i \geq 2$, we again observe that $\mu_0 = \mu'_0$ since in both cases the output is generated by $\Aldp^{(i)}(s_{1:i-1}; x_i)$. Similarly, $\eps_0$-local differential privacy of $\Aldp^{(i)}$ implies that $\mu_0 \edclose{\eps_0}{0} \mu_1$ and
 $\mu'_0 \edclose{\eps_0}{0} \mu'_1$.

We now claim that $p \leq e^{2\eps_0}/n$. To see this, we first observe that the condition $Z_{1:i-1} = s_{1:i-1}$ is an event defined over the output space of $\Aloc$. Conditioning on $T = i$ reduces $\Asw$ to running $\Aloc$ on $\sigma_i(D)$. Note that for $j\neq i$, $\sigma_i(D)$ differs from $\sigma_j(D)$ in at most two positions. Therefore, by $\eps_0$-differential privacy of $\Aloc$ and group privacy (e.g.~\cite{dwork2014algorithmic}), we obtain that
\[
\frac{\Pr\left[Z_{1:i-1} = s_{1:i-1} \mid T = i  \right]}{\Pr\left[Z_{1:i-1} = s_{1:i-1}\mid T = j \right]} \leq e^{2\eps_0}.
\]
By quasi-convexity of $(\eps,\delta)$-DP we obtain that
\[
\frac{\Pr\left[Z_{1:i-1} = s_{1:i-1} \mid T = i  \right]}{\Pr\left[Z_{1:i-1} = s_{1:i-1}\right]} \leq e^{2\eps_0}.
\]
This immediately implies our claim since
$$	\Pr[T = i \mid Z_{1:i-1} = s_{1:i-1}] = \frac{\Pr\left[Z_{1:i-1} = s_{1:i-1} \mid T = i  \right] \cdot \Pr[T = i]}{\Pr\left[Z_{1:i-1} = s_{1:i-1}\right]} \leq \frac{1}{n} \cdot e^{2\eps_0}.$$

Privacy amplification by sampling implies that $\mu_0 \edclose{e^{2\eps_0}(e^\eps_0-1)/n}{0} \mu$. Applying the same argument to $D'$ and using the triangle inequality we get that $\mu \edclose{2e^{2\eps_0}(e^\eps_0-1)/n}{0} \mu'$.

Applying the advanced composition theorem for differential privacy with $\eps_1 = 2e^{2\eps_0}(e^{\eps_0}-1)/n$ and $n$ steps we get that $\Asw$ satisfies $(\eps,\delta)$-DP at index 1 for
\[
\eps \leq \eps_1\sqrt{2 n \log(1/\delta)} + n \eps_1 (e^{\eps_1}-1).
\]
\end{proof}

Finally, we describe a reduction of the analysis of $\Asl$ to the analysis of $\Asw$.
\begin{proof}[Proof of Theorem~\ref{thm:local-to-central}]
Let $D$ and $D'$ be two datasets of length $n$ that differ at some index $i^\ast \in [n]$. The algorithm $\Asl$ can be seen as follows. We first pick a random one-to-one mapping $\pi^\ast$ from $\{2,\ldots,n\} \to [n]\setminus \{i^\ast\}$ and let
\[
\pi^\ast(D) = (x_{i^\ast},x_{\pi^\ast(2)},\ldots,x_{\pi^\ast(n)}).
\]
Namely, we move $x_{i^\ast}$ to the first place and apply a random permutation to the remaining elements. In the second step we apply $\Asw$ to $\pi^\ast(D)$. It is easy to see that for a randomly and uniformly chosen $\pi^\ast$ and uniformly chosen $I\in [n]$ the distribution of $\sigma_I(\pi^\ast(D))$ is exactly a random and uniform permutation of elements in $D$.

For a fixed mapping $\pi^\ast$, the datasets $\pi^\ast(D)$ and $\pi^\ast(D')$ differ only in the element with index $1$. Therefore $\Asw(\pi^\ast(D)) \edclose{\eps}{\delta} \Asw(\pi^\ast(D'))$ for $\eps$ and $\delta$ given in Theorem \ref{thm:local-to-centra-swap}. Using the quasi-convexity of $(\eps,\delta)$-DP over a random choice of $\pi^\ast$ we obtain that
 $\Asl(D) \edclose{\eps}{\delta} \Asl(D')$.

To obtain the bounds in the special cases we note that for $\eps_0 \leq \ln(n/4)/3$ we have that $\eps_1 \leq 2e^{3\eps_0}/n \leq 1/2$ and thus $(e^{\eps_1}-1) \leq 3\eps_1/2$. Therefore  
\[
\eps \leq e^{2\eps_0}(e^{\eps_0}-1)\sqrt{\frac{8\log(1/\delta)}{n}} + 3n\eps_1^2/2 \leq e^{2\eps_0}(e^{\eps_0}-1)\sqrt{\frac{8\log(1/\delta)}{n}} + 6 e^{4\eps_0}(e^{\eps_0}-1)^2/n.
\]

Further, for $\eps_0 \leq 1/2$ we get that $\eps_1 \leq 8\eps_0/n$ and the first term
\[
\eps_1\sqrt{2 n \log(1/\delta)} \leq 8 \eps_0\sqrt{\frac{2 \log(1/\delta)}{n}}.
\]
Using the fact that $n \geq 1000$ we have that $\eps_1 \leq 1/250$. This implies that $e^{\eps_1}-1 \leq \frac{65}{64} \eps_1$ and using $n \geq 1000$ and $\delta \leq 1/100$, we get the following bound on the second term
\[
n \eps_1 (e^{\eps_1}-1) \leq \frac{65}{64}n \eps_1^2 \leq  \frac{65 \eps_0^2}n \leq \frac{2}{3} \eps_0 \sqrt{\frac{\log(1/\delta)}{n}}.
\]
Combining these terms gives the claimed bound.

\end{proof}

\begin{remark}
Our bound can also be easily stated in terms of Renyi differential privacy (RDP) \cite{mironov2017renyi}. Using the fact that $\eps$-DP implies $(\alpha,\alpha \eps^2/2)$-RDP  \cite{bun2016concentrated} and the composition properties of RDP \cite{mironov2017renyi}, we obtain that for all $\alpha \geq 1$, the output of $\Asl$ satisfies $(\alpha, 2 \alpha e^{4\eps_0}(e^{\eps_0}-1)^2/n)$-RDP. Such results can be used to obtain tighter $(\eps,\delta)$ guarantees for algorithms that produce multiple shuffled outputs.
\end{remark}

\subsection{Shuffling after local randomization}
\label{sec:shuffle_after}
The proof of Theorem~\ref{thm:local-to-central} relies crucially on shuffling the data elements before applying the local randomizers. However implementing such an algorithm in a distributed system requires trusting a remote shuffler with sensitive user data, thus negating the key advantage of the LDP model.
Conversely, even if shuffling is performed on a set of already-randomized LDP responses, no additional privacy guarantees will be achieved if some attribute of each report (e.g., the choice of a randomizer) can reveal the identity of the reporting user.

Fortunately, it is possible design software systems where reports are randomized before shuffling and in which the reports coming from large groups of users are indistinguishable, e.g., because they apply the same local randomizer.
In such constructions, the privacy of each user's report still have its privacy amplified, by a factor proportional to the square root of the cardinality of indistinguishable reports. This follows immediately from the fact that shuffling the responses from the same local randomizers is equivalent to first shuffling the data points and then applying the local randomizers.

We make this claim formal in the following corollary.
\begin{corollary}
For a domain $\cD$, let $\Aldp^{(i)}\colon \cS^{(1)}\times \cdots \times \cS^{(i-1)} \times  \cD\to\cS^{(i)}$ for $i \in [n]$ (where $\cS^{(i)}$ is the range space of $\Aldp^{(i)}$) be a sequence of algorithms such that $\Aldp^{(i)}$ is $\eps_0$-differentially private for all values of auxiliary inputs in $\cS^{(1)}\times \cdots \times \cS^{(i-1)}$.
Let $\Apost \colon\cD^n\to \cS^{(1)}\times \cdots \times \cS^{(n)}$ be the algorithm that given a dataset $D \in \cD^n$, computes $z_{1:n}=\Aloc(D)$, samples a random and uniform permutation and outputs $z_{\pi(1)},\ldots,z_{\pi(n)}$. Let $S\subseteq [n]$ be any set of indices such that for all $i,j\in S$, $\Aldp^{(i)}\equiv \Aldp^{(j)}$. Then for
$|S| \geq 1000$, $0<\eps_0<1/2$ and $0<\delta<1/100$ and every $i\in S$, $\Apost$ satisfies $(\eps,\delta)$-differential privacy at index $i$ in the central model, where $\eps = 12 \eps_0\sqrt{\frac{\log(1/\delta)}{|S|}}$.
\label{cor:local-to-central-post}
\end{corollary}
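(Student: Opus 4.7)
The plan is to reduce the corollary to Theorem~\ref{thm:local-to-central} applied to the $|S|$-element sub-dataset $(x_i)_{i \in S}$. The guiding observation, spelled out in the paragraph preceding the corollary, is that shuffling the responses of identical local randomizers is distributionally the same as first shuffling the underlying inputs and then randomizing. Concretely, I would introduce an auxiliary algorithm that on input $D$ (i) samples a uniform random permutation $\sigma$ of $S$ and applies it to the $S$-coordinates of $D$ while leaving the $\bar S$-coordinates in place, (ii) runs $\Aloc$ on the resulting dataset to produce $\tilde z_{1:n}$, and (iii) applies the same uniform random permutation $\pi$ of $[n]$ to the outputs that $\Apost$ does. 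The first equality to prove is that this auxiliary algorithm and $\Apost$ have the same output distribution on every $D$.

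Given this equivalence, the composition of the pre-permutation in (i) with the application of $\Aloc$ in (ii) is, restricted to the coordinates of $S$, precisely an instance of $\Asl$ on $|S|$ points using the common local randomizer, with the $\bar S$-coordinate responses and data entering only as part of the adaptive auxiliary input. Theorem~\ref{thm:local-to-central}, applied with $n$ replaced by $|S|$ and $\eps_0$ unchanged, then certifies $(\eps,\delta)$-differential privacy at any index $i \in S$ with $\eps = 12\,\eps_0\sqrt{\log(1/\delta)/|S|}$ under the stated hypotheses $|S|\geq 1000$, $0 < \eps_0 < 1/2$, $0 < \delta < 1/100$. The final shuffle in (iii) is post-processing and preserves this bound, so the corollary follows.

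The main step that requires care is the equivalence $\Apost(D) \stackrel{d}{=} \tilde{\cA}(D)$ in the presence of adaptivity. The intuition is that because the randomizers at the coordinates of $S$ are identical as functions of their data argument, the multiset of responses they produce depends only on the multiset of inputs in $S$, and the final uniform shuffle by $\pi$ erases any remaining information about order. To make this rigorous, I would condition on the $\bar S$-responses and invoke the exchangeability of the $S$-responses under $\sigma$, followed by the observation that applying a uniform shuffle of $[n]$ after any pre-shuffle of a subset of coordinates yields the same distribution as applying only the uniform shuffle. Once this equivalence is in hand, the remainder of the argument is a direct appeal to Theorem~\ref{thm:local-to-central}.
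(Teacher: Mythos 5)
Your proposal mirrors the paper's own one-line justification, which asserts the corollary ``follows immediately'' from the observation that shuffling the responses of identical local randomizers is equivalent to first shuffling the data points and then applying the randomizers. You correctly pinpoint that distributional identity (between $\Apost$ and the auxiliary algorithm that pre-shuffles the $S$-inputs before running $\Aloc$) as the crux, and the rest of your plan---restricting Theorem~\ref{thm:local-to-central} to the $|S|$ coordinates, folding the responses at indices in $[n]\setminus S$ into the adaptive auxiliary input, and viewing the final $\pi$-shuffle as post-processing---is the natural way to discharge the reduction.

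However, the equivalence you rely on does not hold at the stated level of generality, and your proposed rigorization (``condition on the $[n]\setminus S$-responses and invoke exchangeability of the $S$-responses'') is circular: those responses may themselves be adaptive functions of the earlier $S$-responses, so conditioning on them destroys the exchangeability you need. Concretely, take $n = m+1$, $S = [m]$, a common non-adaptive randomizer $R(x) = (\mathrm{RR}_{\eps_0}(x), U)$ that appends a long uniform tag $U$ to a randomized-response bit, and set $\Aldp^{(m+1)}(z_{1:m}; x) := z_1$ (it ignores $x$, so it is $0$-DP and hence $\eps_0$-DP). The multiset that $\Apost$ releases then contains $z_1$ exactly twice with high probability, so an analyst identifies the unique duplicate and recovers $\mathrm{RR}_{\eps_0}(x_1)$ exactly; hence $\Apost$ is only $\eps_0$-DP at index $1 \in S$, not $O(\eps_0\sqrt{\log(1/\delta)/m})$-DP. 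The same example also shows the two algorithms are not distributionally equal: after pre-shuffling the $S$-inputs the duplicated entry is a uniformly random $z_j$ rather than always $z_1$. Your plan does go through under the additional hypothesis that no $\Aldp^{(j)}$ with $j \notin S$ depends on $z_i$ for any $i \in S$ with $i < j$; that is precisely what makes the $S$-responses exchangeable given everything else. This gap is present in the paper's terse proof as well, so you are faithfully reproducing the paper's reasoning---but the step you yourself flagged as ``requiring care'' is exactly the one that needs an extra assumption (or a genuinely different argument) to close.
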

We note that for the conclusion of this corollary to hold it is not necessary to randomly permute all the $n$ randomized responses. It suffices to shuffle the elements of $S$. We also clarify that for $i < j$, by $\Aldp^{(i)}\equiv \Aldp^{(j)}$ we mean that for all sequences $z_{1:j-1}$ and $x\in \cD$, the output distributions of $\Aldp^{(i)}(z_{1:i-1},x)$ and $\Aldp^{(j)}(z_{1:j-1},x)$ are identical (and, in particular, the output distribution $\Aldp^{(j)}$ does not depend on~$z_{i:j-1}$).
\subsection{Lower Bound for Local DP Protocols}
\label{sec:lowerLDP}

The results of this section give us a natural and powerful way to prove lower bounds for protocols in the local differential privacy model. We can apply Theorem~\ref{thm:local-to-central} in the reverse direction to roughly state that for any given problem, lower bounds on the error of $\Omega(\alpha/\eps)$ (for some term $\alpha$ that might depend on the parameters of the system) of an $\eps$-centrally differentially private protocol translate to a $\Omega(\alpha\sqrt{n}/\eps)$ lower bound on the error of any $\eps$-locally differentially private protocol of the kind that our techniques apply to.

As an exercise, a lower bound of $\Omega(\sqrt{k} \,\polylog(d)/\eps)$ for the problem of collecting frequency statistics from users across time in the central DP framework with privacy guarantee $\eps$ directly implies that the result in Theorem~\ref{thm:ldpLongUtil} is tight. We note here that the results of Dwork et al.~\cite{Dwork-continual} do show a lower bound of $\Omega(\log(d)/\eps)$ for the setting when $k=1$ in the central DP framework. This strongly suggests that our bounds might be tight, but we cannot immediately use this lower bound as it is stated only for the pure $\eps$-differential privacy regime. It is an open problem to extend these results to the approximate differential privacy regime.

\section{Discussion and Future Work} \label{sec:discussion}

Our amplification-by-shuffling result is encouraging, as it demonstrates
that the formal guarantees of differential privacy
can encompass intuitive privacy-enhancing techniques,
such as the addition of anonymity, 
which are typically part of existing, best-practice privacy processes.
By accounting for the uncertainty induced by anonymity,
in the central differential privacy model
the worst-case, per-user bound on privacy cost can be dramatically lowered.

Our result implies that
industrial adoption of LDP-based mechanisms may have 
offered much stronger privacy guarantees
than previously accounted for,
since anonymization of telemetry reports
is standard privacy practice in industry.
This is gratifying, since
the direct motivation for our work
was to better understand the guarantees
offered by one 
industrial privacy-protection mechanism: the
\emph{Encode, Shuffle, Analyze} (ESA) architecture
and \textsc{Prochlo} implementation of Bittau et al.~\citep{prochlo}.

However,
there still remain gaps
between our analysis and proposed practical, real-world mechanisms,
such as those of the ESA architecture.
In particular,
our formalization assumes the user population to be static,
which undoubtedly it is not.
On a related note,
our analysis
assumes that (most) all users send reports at each timestep
and ignores
the privacy implications of timing or traffic channels,
although
both must be considered,
since reports may be triggered by privacy-sensitive events on users' devices,
and it is infeasible to send all possible reports at each timestep.
The ESA architecture addresses traffic channels using
large-scale batching 
and randomized thresholding of reports, with elision,
but any benefits from that mitigation 
are not included in our analysis.

Finally,
even though it is a key aspect of the ESA architecture,
our analysis does not consider how users
may fragment their sensitive information
and at any timestep send multiple LDP reports, one for each fragment,
knowing that each will be anonymous and unlinkable.
The splitting of user data into carefully constructed fragments to increase users' privacy 
has been explored for specific applications,
e.g., by 
Fanti et al.~\cite{rapporunknowns} 
which fragmented users' string values into overlapping $n$-grams, 
to bound sensitivity while enabling an aggregator to reconstruct popular user strings.
Clearly, such fragmentation should be able to
offer significantly improved privacy/utility tradeoffs,
at least in the central model.
However, in both the local and central models of differential privacy,
the privacy implications 
of users' sending LDP reports 
about disjoint, overlapping, or equivalent fragments of their sensitive information
remain to be formally understood, in general.




\bibliographystyle{alpha}
\bibliography{longitudinal}

\end{document}